\pdfoutput=1
\documentclass[11pt]{article}

\usepackage[table]{xcolor}
\PassOptionsToPackage{hyphens}{url}

\usepackage{booktabs}
\usepackage{multirow}
\usepackage{caption}
\usepackage{xspace}
\usepackage{afterpage}
\usepackage{microtype}
\usepackage{tikz}
\usetikzlibrary{arrows.meta,decorations.pathreplacing,calc,backgrounds,positioning}
\usepackage[numbers]{natbib}

\usepackage{amsmath,amsfonts,bm}

\def\eqref#1{equation~\ref{#1}}

\def\plaineqref#1{\ref{#1}}

\def\1{\bm{1}}

\def\rr{{\textnormal{r}}}

\DeclareMathAlphabet{\mathsfit}{\encodingdefault}{\sfdefault}{m}{sl}
\SetMathAlphabet{\mathsfit}{bold}{\encodingdefault}{\sfdefault}{bx}{n}

\usepackage{amsthm}

\usepackage[margin=1in, top=1in]{geometry}
\usepackage{graphicx}
\usepackage{fancyhdr}
\usepackage{hyperref}
\usepackage{xcolor}
\usepackage{titlesec}
\usepackage{tcolorbox}
\usepackage{titling}
\tcbuselibrary{skins}

\usepackage[T1]{fontenc}
\usepackage{tgheros}
\usepackage[utf8]{inputenc}

\usepackage{amsmath,amssymb,bm}
\usepackage{newtxtext}
\usepackage{newtxmath}

\AtBeginDocument{%

}

\providecommand{\titlefont}{\sffamily\bfseries}
\providecolor{qc_darkblue}{rgb}{0.008,0.063,0.247}
\providecolor{qc_blue}{rgb}{0.164,0.164,0.914}

\titleformat{\title}
{\titlefont\LARGE\color{qc_blue}}{}{0pt}{}

\titleformat{\section}
{\titlefont\Large\bfseries\color{qc_darkblue}}{\thesection}{1em}{}

\titlespacing*{\section}{0em}{1em}{.6em}

\newtcolorbox{titlebox}{
  enhanced,
  colback=white,
  boxrule=0pt,
  opacityback=0,
  opacityframe=0,
  width=0.95\textwidth,
  center
}

\makeatletter
\newcommand{\contactinfo}[1]{\def\@contactinfo{#1}}
\makeatother
\contactinfo{}

\renewcommand{\headrulewidth}{0.5pt}

\fancypagestyle{titlepage}{
  \fancyhf{}
  \fancyfoot[L]{\footnotesize Qualcomm AI Research is an initiative of Qualcomm Technologies, Inc.}
  \fancyfoot[R]{\footnotesize\thepage}
  \renewcommand{\headrulewidth}{0pt}
}
\renewcommand\abstract{%
    \setlength{\parskip}{.8em}
    \par
}

\title{Paper Title}
\date{February 23, 2024}
\author{Author1, Author2}
\contactinfo{\{author1, author2\}@qualcomm.com}

\renewenvironment{abstract}
  {\setlength{\parskip}{.8em}\par}
  {\par}

\newtheorem{proposition}{Proposition}
\newtheorem{lemma}{Lemma}
\newtheorem{corollary}{Corollary}
\newtheorem{assumption}{Assumption}
\newtheorem{remark}{Remark}

\newcommand{\name}{SyncVid\xspace}

\hypersetup{
  pdftitle={SyncVid: World-Equivariant Diffusion for Synchronized Video Generation},
  pdfauthor={Farhad G. Zanjani, Animesh Karnewar, Fatih Porikli, Amirhossein Habibian},
  colorlinks=true,
  linkcolor=qc_darkblue,
  citecolor=qc_darkblue,
  urlcolor=qc_blue,
}

\begin{document}
\thispagestyle{titlepage}
\pagestyle{fancy}
\bibliographystyle{unsrtnat}

\begin{tikzpicture}[remember picture,overlay]
  \node[anchor=north east, inner sep=0pt]
    at ([xshift=5cm,yshift=5.5cm]current page.north east)
    {\includegraphics[width=10cm,angle=-75,origin=c]%
      {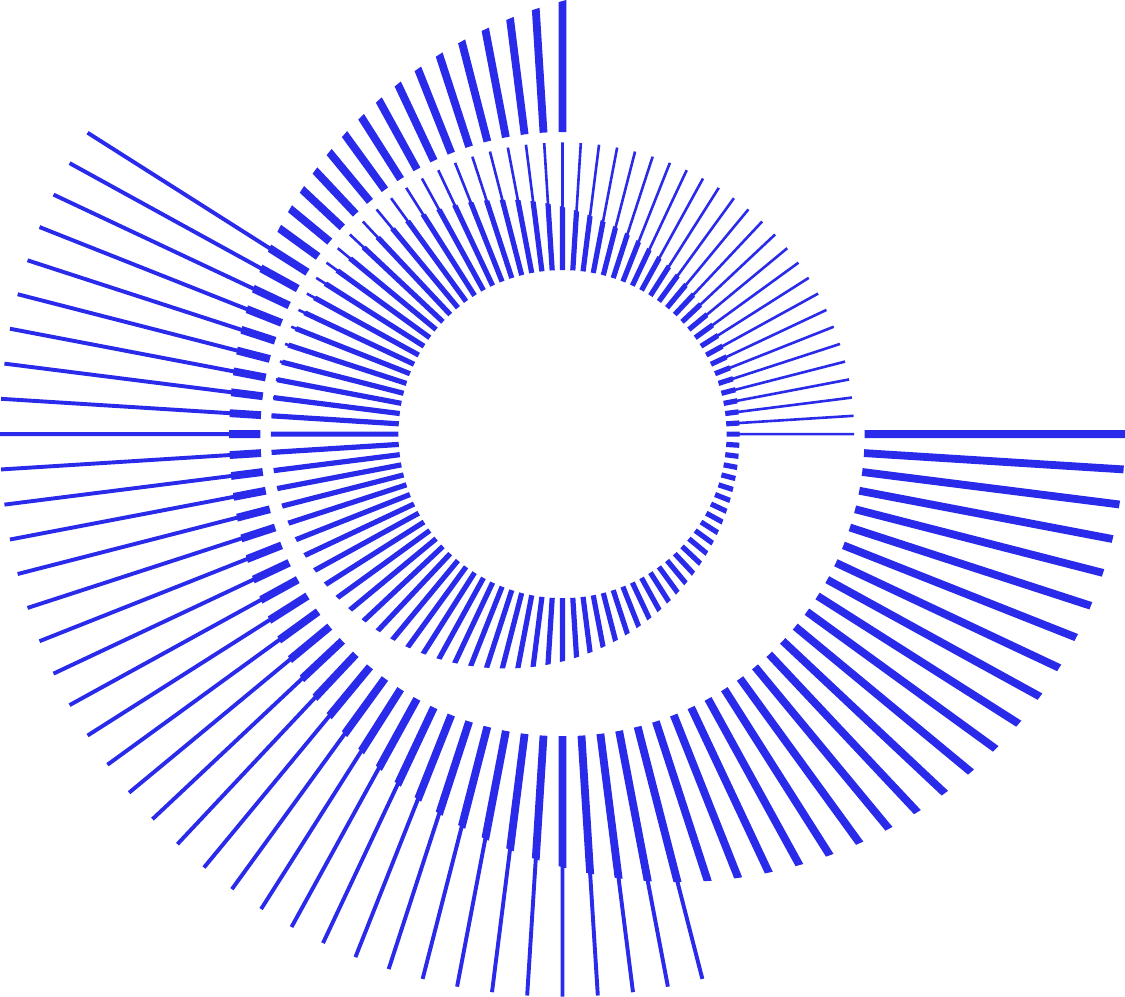}};
\end{tikzpicture}

\begin{figure}[t]
    \vspace*{-1cm}
    \hspace*{-0.6cm}
    \includegraphics[width=4.0cm]{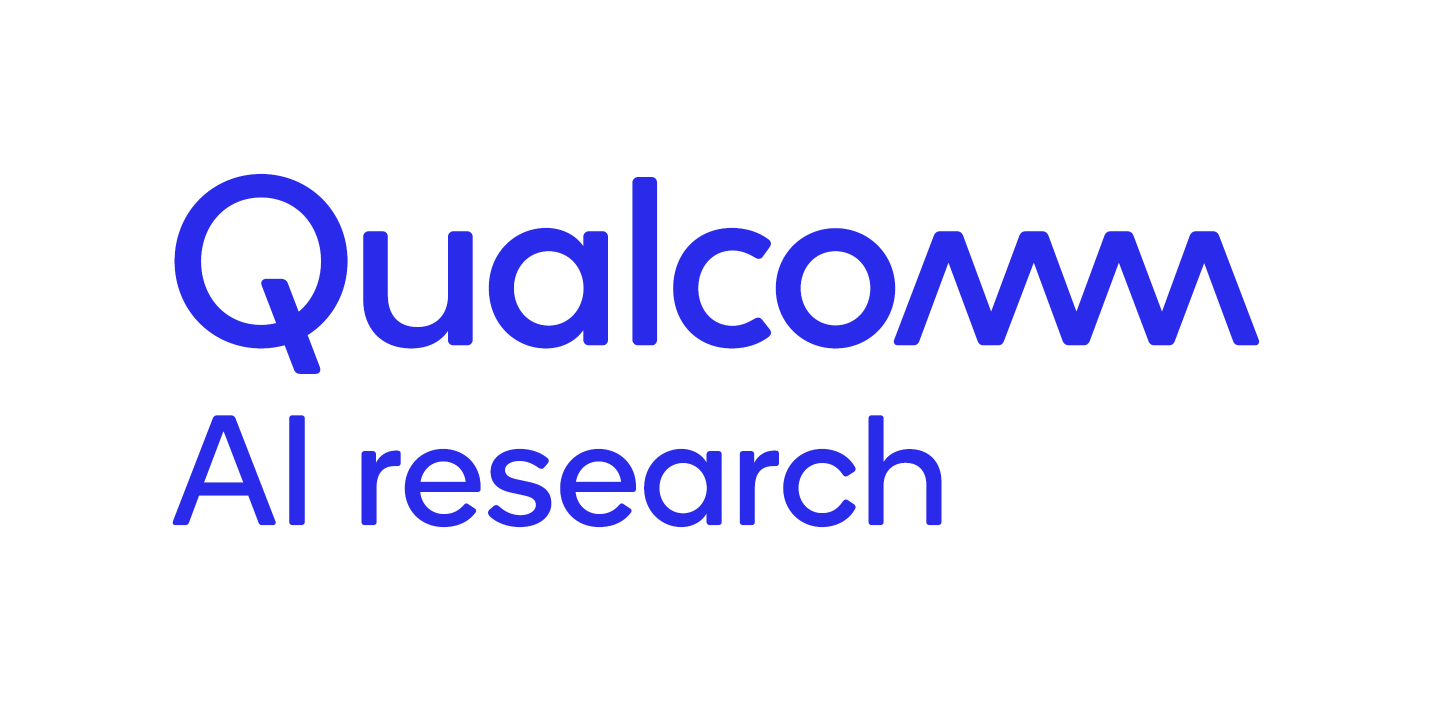}
    \vspace*{-0.5cm}
\end{figure}
\vspace{-2em}

\title{\textbf{\name}: World-Equivariant Diffusion for\\ Synchronized Video Generation}
\date{October 7, 2026}
\author{
    Farhad G.~Zanjani,
    Animesh Karnewar,
    Fatih Porikli,
    Amirhossein Habibian
}
\contactinfo{\{fzanjani, karnewar, fporikli, ahabibia\}@qti.qualcomm.com}

\begin{titlebox}
{\titlefont\huge\bfseries\color{qc_darkblue}\thetitle}\\

\makeatletter
{\titlefont\mdseries\normalfont\small\color{qc_blue}\@author\par
 \vspace{0.3em}
 {\bfseries\@contactinfo}%
}
\makeatother

\begin{abstract}
Generating multiple videos of a dynamic scene from a single reference video
requires synthesizing both observed and unobserved content consistently across
viewpoints. Existing camera-control methods for video diffusion typically rely
on architectural mechanisms, including camera embeddings, cross-view attention,
and geometric conditioning. Although effective for static scene content, these
approaches often struggle with the fine-grained, dynamic geometry that must
remain coherent across views and over time.
We synchronize the diffusion process itself by sampling the Gaussian noise prior
in a canonical 3D space derived from the source video and rasterizing it into
target cameras, rather than sampling independently in each 2D image plane. Our
method, \name, lifts a monocular video into a canonical 4D representation,
associates noise with scene geometry and unseen space, and rasterizes the
resulting structured noise into arbitrary views. Because rendering can
introduce local correlations, we further propose a cross-view-consistent
whitening procedure that decorrelates neighboring pixels while preserving
correspondence across views.
The structured noise serves as the diffusion prior for a LoRA-adapted
pretrained video diffusion model, requiring no architectural modifications. By
enforcing spatiotemporal and cross-view coherence at the source of stochasticity,
\name generates synchronized videos and improves both perceptual
quality and cross-view consistency in our experiments.
Project page: \url{https://qualcomm-ai-research.github.io/SyncVid-Webpage}.
\end{abstract}

\end{titlebox}

\afterpage{\begin{figure}[!t]
\centering
\includegraphics[
  width=\linewidth,
  trim=51pt 115pt 44pt 108pt,
  clip
]{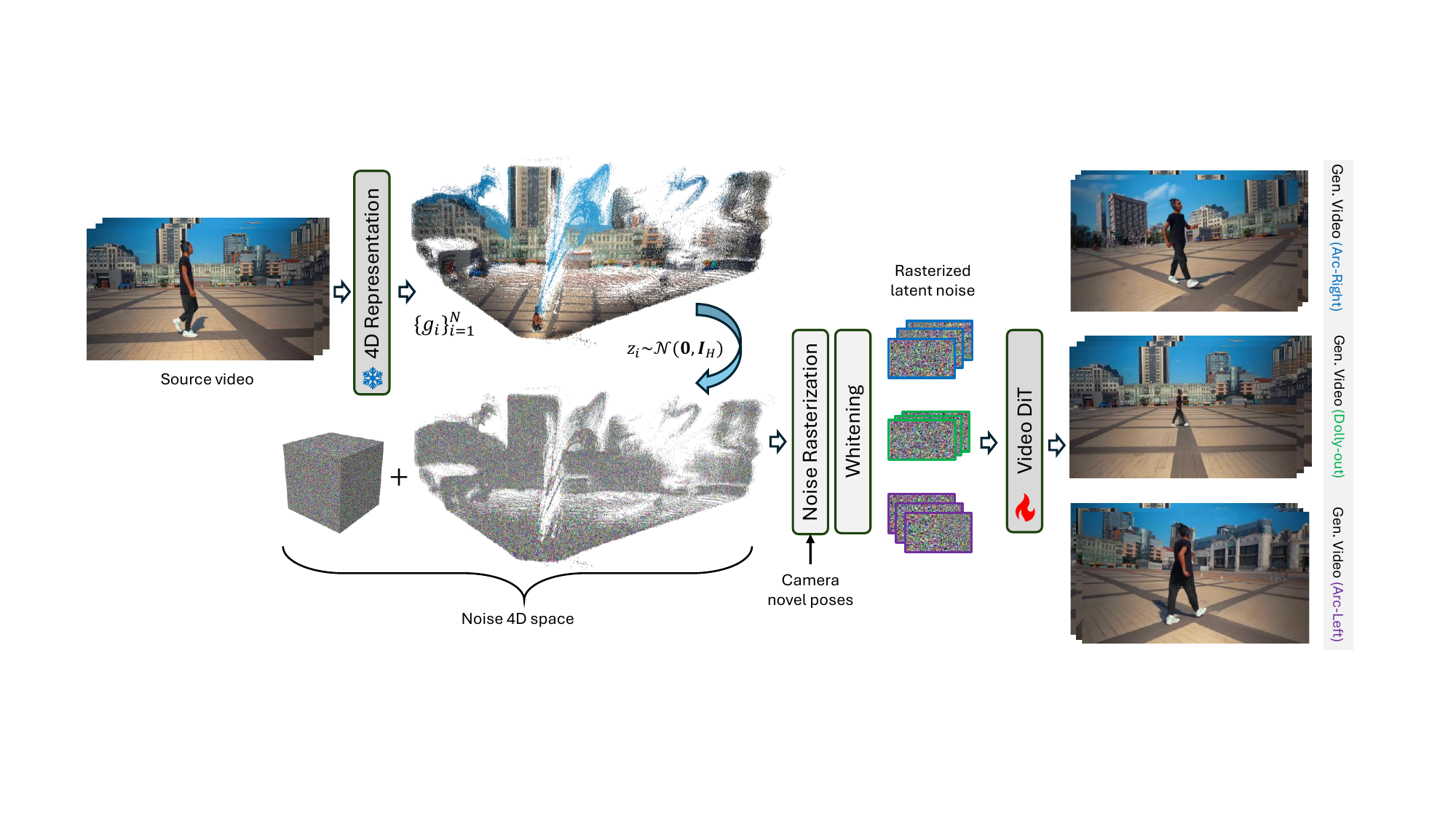}
\caption{Overview of SyncVid. Given a source video, SyncVid (1) lifts the
observations into a canonical 4D point representation; (2) samples a Gaussian
noise vector for each 3D primitive in the scene; and (3), after specifying
novel camera trajectories, rasterizes and whitens the shared 3D noise into
the corresponding views. These synchronized priors guide video diffusion to
produce temporally and cross-view consistent videos from multiple cameras.}
\label{fig:teaser}
\vspace{-8pt}
\end{figure}
}
\section{Introduction}
\label{sec:introduction}

Video diffusion models have rapidly matured from generating a single plausible
video to supporting fine-grained user control, including camera trajectories
\cite{he2025cameractrl,wang2024motionctrl,xu2024camco}.
A natural extension of this control is \emph{synchronized} video generation:
producing multiple videos of the same dynamic scene from different viewpoints
that remain consistent with one another.
This capability is central to applications such as virtual reshooting and
cinematography, multi-view content creation for AR/VR, and synthesizing
training data for downstream 3D and 4D vision tasks.
Recent work approaches this problem through camera-conditioned reshooting,
self-supervised pseudo-view training, or joint multi-view denoising with
cross-view interaction modules~\cite{bai2025recammaster,paliwal2026reshootanything,
bai2025syncammaster,kuang2024cvd,xu2025cavia}. These methods demonstrate that
video diffusion models can produce multiple views with useful visual and
geometric agreement, but rely primarily on architectural coupling, rendered
conditioning, or specialized training data.

A separate line of research has shown that the noise input to a video diffusion
model is itself a powerful conditioning signal, rather than merely an
initialization to be denoised away. Correlating noise across frames, instead of
sampling it independently, improves temporal consistency and enables direct
control over motion.
PYoCo~\cite{ge2023pyoco} introduces correlated noise priors for video
diffusion, while \emph{How I Warped Your Noise}~\cite{chang2024warpednoise}
and \emph{Go-with-the-Flow}~\cite{burgert2025gowiththeflow} warp a continuous
noise field according to optical flow. EquiVDM~\cite{liu2025equivdm} further
formalizes the resulting equivariance between noise transformations and
generated motion.
These results suggest that \emph{structured noise} -- noise with an explicit,
non-trivial correlation structure tied to the underlying content -- can serve as
an effective and lightweight conditioning mechanism, complementing changes in
model architecture or training data.

Despite this progress, current approaches for constructing structured noise
typically warp a 2D noise map using \textit{in-plane} rotations and translations
of 2D noise patches to propagate samples into future frames. Alternatively, they
require the optical flow of the target video, which is impractical to obtain
before generating that video.
These limitations can substantially degrade video quality, particularly for
dynamic objects undergoing \textit{out-of-plane} rotations, partial occlusions,
and subsequent reappearance.

In this work, we show that synchronizing the diffusion prior in a canonical 3D
space can induce world-equivariant video generation. Specifically,
camera-induced changes in the 3D noise representation are reflected as
consistent changes in the generated views. We achieve this by synchronizing
the diffusion process at its source of randomness: we construct a canonical 3D
noise representation that spans the reconstructed scene observed in the
reference video as well as its surrounding unobserved space. We then
rasterize this representation into each target camera to produce
view-consistent noise priors, including for complex transformations such as
out-of-plane rotations. Because all target views derive their stochastic
initialization from the same canonical representation, the diffusion model is
encouraged to generate correlated completions of unseen content, while
conditioning on the reconstructed scene preserves consistency in observed
regions. An overview of our method is shown in Fig.~\ref{fig:teaser}.
We summarize our contributions as follows:

\begin{itemize}
    \item We propose a structured noise prior for video diffusion models by
    sampling and assigning noise vectors in the 3D space of the video content,
    rather than independently in each camera's 2D image plane. Rasterizing this
    representation preserves dense spatiotemporal structure across views,
    without relying on 2D warping or editing of image-plane noise.
    \item We introduce a view-consistent whitening procedure that reduces local
    spatial correlations introduced by rasterization while preserving
    cross-view correspondence, keeping the resulting noise close to the
    marginal distribution expected by the pretrained diffusion model.
    \item We establish an automated benchmarking pipeline for synchronized video
    generation on the Mixkit subset of Open-Sora, enabling reliable and
    reproducible comparisons without manual or subjective selection of
    evaluation samples.
\end{itemize}

\section{Related Work}
\label{sec:related_work}

\subsection{Structured Noise for Synchronized Video Generation}
Structured noise has emerged as an effective mechanism for improving temporal
consistency and controlling motion in video diffusion models.
PYoCo~\cite{ge2023pyoco} observes that when an image diffusion model is adapted
into a video model, the noise injected at different frames should preserve temporal
correlation rather than being independent: independent per-frame noise lets the
denoiser drift toward a slightly different appearance in each frame, causing jitter
and flicker, whereas correlated noise starts neighboring frames from related latent
structure and denoises them into consistent appearance and smoother motion.
This gain requires no new architecture---no motion modules, optical-flow
supervision, or video transformers---but only a modified noise prior, so a
pretrained image model reaches strong temporal consistency by fine-tuning on
comparatively little video data at low cost.
Later work makes the correlation explicit: \emph{How I Warped Your Noise}~
\cite{chang2024warpednoise} transports a continuous Gaussian noise field across
frames while preserving its distribution, \emph{Go-with-the-Flow}~
\cite{burgert2025gowiththeflow} warps noise by real-time optical flow to control
object and camera motion, and EquiVDM~\cite{liu2025equivdm} shows that training
with warped noise encourages equivariance between noise transformations and
generated motion.
All build correspondence through 2D motion warping, which suits consecutive frames
of a single camera, where motion is well approximated by an in-plane flow field, but
does not extend across cameras: a 2D warp expresses neither out-of-plane camera
rotation nor correspondence for surfaces that a different viewpoint newly reveals
or occludes.

\subsection{Synchronized Camera-Controlled Video Generation}
Early camera-control methods such as CameraCtrl~\cite{he2025cameractrl},
MotionCtrl~\cite{wang2024motionctrl}, and CamCo~\cite{xu2024camco} condition
single-view generation on camera trajectories, but do not explicitly enforce
cross-view synchronization. Other methods couple denoising across views:
SynCamMaster~\cite{bai2025syncammaster} adds a multi-view synchronization module
for joint multi-camera generation, Collaborative Video Diffusion
(CVD)~\cite{kuang2024cvd} exchanges information through epipolar-guided
cross-view attention, and CAVIA~\cite{xu2025cavia} uses spatial and temporal
cross-attention for view consistency.
A third line uses geometry as an indirect synchronization signal:
ReCamMaster~\cite{bai2025recammaster} reshoots videos along novel trajectories
using video conditioning and synchronized synthetic multi-view training data,
Reshoot-Anything~\cite{paliwal2026reshootanything} reduces dependence on paired
multi-view data through self-supervised pseudo-view training around a 4D
point-cloud anchor, and ViewCrafter~\cite{yu2025viewcrafter} guides
diffusion-based novel-view video synthesis with coarse rendered geometry.
Diffusion-enhancement methods address related but different settings:
ArtiFixer~\cite{delutio2026artifixer} couples video diffusion with 3D
reconstruction, using opacity-aware latent mixing and autoregressive novel-view
generation to enhance incomplete reconstructions, while
SetDiff~\cite{zanjani2026setdiff} enhances Gaussian-Splatting novel-view
renderings in a multi-view setting by set-based diffusion over variable-size
reference and target views grounded in pixel-aligned coordinate maps and
pose-aware Plucker ray embeddings.
Overall, existing methods synchronize through camera embeddings, cross-view
interaction modules~\cite{bai2025syncammaster,kuang2024cvd,xu2025cavia}, or
rendered geometric conditions~\cite{bai2025recammaster,paliwal2026reshootanything,
yu2025viewcrafter,zanjani2026setdiff}. In contrast, we impose camera control with
structured noise: instead of sampling prior Gaussian noise independently in each
camera plane, we sample a canonical 3D field and project it into each view,
providing both geometric conditioning and view-consistent diffusion initialization.

\section{Method}

\subsection{Preliminary: Noise Sampling in Video Diffusion Models}
\label{sec:prelim_noise_sampling}

Let $\mathbf{x}_0 \in \mathbb{R}^{t \times C \times h \times w}$ denote a clean VAE latent 
features of a video $\mathbf{v} \in \mathbb{R}^{T \times 3 \times H \times W}$, where $C$ is 
the channel dimension of the VAE latent, and  $t, h, w$ are it's compressed spatiotemporal resolutions. 
A video diffusion model defines a forward noising process:
\begin{equation}
     \mathbf{x}_t = \sqrt{\bar{\alpha}_t}\mathbf{x}_0 +
    \sqrt{1-\bar{\alpha}_t}\boldsymbol{\epsilon}, \qquad \boldsymbol{\epsilon} \sim
    \mathcal{N}(\mathbf{0},\mathbf{I}), \label{eq:forward_diffusion}
\end{equation}
where $\bar{\alpha}_t$ is determined by the noise schedule. The denoising network is
commonly trained to predict either the sampled noise $\boldsymbol{\epsilon}$, the clean
latent $\mathbf{x}_0$, or a velocity parameterization derived from the two. The
standard formulation assumes that the entries of $\boldsymbol{\epsilon}$ are
independent and identically distributed (i.i.d.) standard Gaussian variables. In particular, for
two different spatiotemporal latent locations $p$ and $q$,
\begin{equation}
     \mathbb{E}[\boldsymbol{\epsilon}_p] = \mathbf{0}, \qquad
    \operatorname{Cov}(\boldsymbol{\epsilon}_p) = \mathbf{I}_C, \qquad
    \operatorname{Cov}(\boldsymbol{\epsilon}_p,\boldsymbol{\epsilon}_q) = \mathbf{0}
    \quad \text{for } p \neq q. \label{eq:white_noise_assumption}
\end{equation}
This i.i.d. noise assumption is part of the training distribution seen by the pretrained
diffusion model. Directly replacing the sampled noise with a strongly locally correlated noise pattern
can therefore introduce a train--test mismatch, even when the denoiser is adapted using
low-rank adaptation (LoRA). 

\subsection{Preliminary: Feed-Forward Dynamic Gaussian Splatting}
\label{sec:prelim_dynamic_gs}

Given source video frames
$\mathbf{I}^{\mathrm{src}}=\{\mathbf{I}^{\mathrm{src}}_t\}_{t=1}^{T}$, we employ an off-the-shelf
feed-forward dynamic Gaussian Splatting model e.g. NoPo4D~\cite{balice2026nopo4d} to
obtain a time-varying 3D representation:
\begin{equation}
     \mathcal{G},\, \boldsymbol{\Pi}^{\mathrm{src}}_{1:T} = \mathcal{F}_{\mathrm{GS}}
    \left( \mathbf{I}^{\mathrm{src}}_{1:T} \right),
\end{equation}
where $\boldsymbol{\Pi}^{\mathrm{src}}_t$ contains the estimated source-camera
parameters and
\begin{equation}
     \mathcal{G} = \{\mathcal{G}_i\}_{i=1}^{N}, \qquad \mathcal{G}_i = \left(
    \bar{\boldsymbol{\mu}}_i,\, \boldsymbol{\mu}_i(t),\, \boldsymbol{\Sigma}_i(t),\,
    \alpha_i,\, \mathbf{c}_i \right), \label{eq:primitive_attributes}
\end{equation}
denotes the set of 4D Gaussian primitives. Each primitive has time-varying geometry
and appearance together with a time-invariant, view-independent canonical anchor
$\bar{\boldsymbol{\mu}}_i\in\mathbb{R}^{3}$ used to identify it across motion and views.
For unobserved regions, we augment the reconstruction with a finite enclosing cubemap.
Implementation details of this 4D representation are provided in Appendix~\ref{sec:supp_experimental_details}.
Background samples are obtained by intersecting each target camera ray with this box,
which assigns every background pixel an explicit 3D surface point
\begin{equation}
     \mathbf{X}^{\mathrm{cube}}_p = \mathbf{o}_v + s_p \mathbf{d}_p, \qquad s_p = \min
    \left\{ s>0 : \mathbf{o}_v + s\mathbf{d}_p \in \partial \mathcal{B} \right\},
    \label{eq:cubemap_intersection}
\end{equation}
where $\mathbf{o}_v$ and $\mathbf{d}_p$ are the target-camera centre and ray direction.
Given a target pose $\boldsymbol{\Pi}^{\mathrm{tar}}_t$, the novel-view RGB frame is
obtained by compositing the rasterized Gaussian foreground with this background:
\begin{equation}
     \widetilde{\mathbf{I}}^{\mathrm{tar}}_t = \mathbf{C}^{\mathrm{GS}}_t +
    \left(1-\mathbf{A}^{\mathrm{GS}}_t\right) \odot \mathbf{C}^{\mathrm{cube}}_t,
    \label{eq:gs_cubemap_composition}
\end{equation}
where $(\mathbf{C}^{\mathrm{GS}}_t,\mathbf{A}^{\mathrm{GS}}_t)$ are the rasterized
foreground color and opacity. Rendering Eq.~\plaineqref{eq:gs_cubemap_composition}
over target poses produces the conditioning video used by our diffusion model.

\subsection{Canonical 3D Noise Latent} \label{sec:canonical_3d_noise}

\begingroup
\setlength{\abovedisplayskip}{3pt plus 1pt minus 1pt}
\setlength{\belowdisplayskip}{3pt plus 1pt minus 1pt}
\setlength{\abovedisplayshortskip}{2pt plus 1pt minus 1pt}
\setlength{\belowdisplayshortskip}{2pt plus 1pt minus 1pt}
\setlength{\jot}{1pt}

We represent the source video by the set of 4D
Gaussian primitives $\mathcal{G}=\{\mathcal{G}_i\}_{i=1}^{N}$ of
Eq.~\plaineqref{eq:primitive_attributes}, each carrying a canonical anchor position
$\bar{\boldsymbol{\mu}}_i\in\mathbb{R}^{3}$ alongside its time-varying Gaussian
Splatting attributes. We additionally assign each primitive an independent
high-dimensional noise vector
\begin{equation}
     \mathbf{z}_i \in \mathbb{R}^{H}, \qquad \mathbf{z}_i
    \overset{\mathrm{i.i.d.}}{\sim} \mathcal{N}(\mathbf{0},\mathbf{I}_H),
    \label{eq:primitive_noise}
\end{equation}
where $H$ denotes the dimension of the canonical 3D noise vectors and is chosen larger than the channel dimension $C$ of the VAE latent features, whose
typical values are $C\in\{16, 48\}$. The vectors $\{\mathbf{z}_i\}$ are sampled once
and shared across all source and target views. Crucially, $\mathbf{z}_i$ is attached to
the identity of primitive $i$ and is a function of neither time nor viewpoint: a
dynamic primitive whose center, covariance, or opacity varies over the sequence carries
the same $\mathbf{z}_i$ throughout. 

Unobserved regions are handled by the
same construction: we attach an i.i.d.\ noise vector
$\mathbf{z}^{\mathrm{cube}}_j\in\mathbb{R}^{H}$ to every texel $j$ of the enclosing box
of Sec.~\ref{sec:prelim_dynamic_gs}, and a background pixel reads the texel containing
its ray--box intersection $\mathbf{X}^{\mathrm{cube}}_p$. 

Given the target camera pose at time $t$, rasterizing all primitive noise features, contributed into pixel $p$ of 2D camera plane, gives the noise vector in camera view:
\begin{equation}
     \mathbf{Y}_p = \mathcal{R}_{\mathrm{GS}} (\mathcal{G}(t),\boldsymbol{\Pi}^{\mathrm{tar}}_t)=\sum_{i=1}^{N} w_{p,i}\mathbf{z}_i \in\mathbb{R}^{H},
    \label{eq:rasterized_noise_feature}
\end{equation}
where $w_{p,i}$ denotes the contribution of the $i$-th Gaussian point to pixel
$p$. The corresponding pixel $q$ in a different view that observes the same
scene region shares contributing primitives and therefore depends on the same
canonical noise vectors. In particular,
their cross-covariance is
\begin{equation}
     \operatorname{Cov}(\mathbf{Y}_p,\mathbf{Y}_q) = \left( \sum_{i=1}^{N}
    w_{p,i}w_{q,i} \right)\mathbf{I}_H. \label{eq:rasterized_cross_covariance}
\end{equation}
This shared dependence establishes cross-view noise consistency through the underlying
3D correspondence. The geometric correspondence underlying this result is proved
for arbitrary camera poses and time-dependent 4D deformations at a fixed timestamp (Appendix~\ref{sec:rotation_equivariance}).
Since $\mathbf{Y}_p$ is a linear combination of independent Gaussian
vectors, its marginal distribution and normalized feature are
\begin{align}
     \mathbf{Y}_p \sim \mathcal{N} \left( \mathbf{0}, v_p\mathbf{I}_H \right), \qquad
    v_p = \sum_{i=1}^{N} w_{p,i}^{2}. \label{eq:rasterized_noise_distribution}
\\[-1pt]
     \widetilde{\mathbf{Y}}_p = \frac{\mathbf{Y}_p}{\sqrt{v_p+\varepsilon}} = \frac{
    \sum_i w_{p,i}\mathbf{z}_i }{ \sqrt{\sum_i w_{p,i}^{2}+\varepsilon} },
    \label{eq:normalized_noise_feature}
\end{align}
where Eq.~\ref{eq:normalized_noise_feature} normalizes the rasterized feature using the variance factor $v_p$ 
and $\varepsilon>0$ provides numerical stability. The $\widetilde{\mathbf{Y}}_p$ has 
approximately zero mean and identity marginal covariance. Despite this normalization, as shown 
in Eq.~\plaineqref{eq:rasterized_cross_covariance}, nearby pixels sharing Gaussian primitives remain correlated:
\begin{equation}
     \operatorname{Cov} \left( \widetilde{\mathbf{Y}}_p, \widetilde{\mathbf{Y}}_q
    \right) = \rho_{pq}\mathbf{I}_H, \qquad \rho_{pq} = \frac{ \sum_i w_{p,i}w_{q,i} }{
    \sqrt{(v_p+\varepsilon)(v_q+\varepsilon)} }. \label{eq:normalized_cross_covariance}
\end{equation}
While this shared structure provides the desired \textbf{cross-view correspondence}, overlapping
Gaussian footprints also introduce undesirable local correlations within each rendered
noise map. Neighboring pixels typically share several of the same contributing
primitives, so their noise values remain far from independent even after the per-pixel
normalization above. Because this spatial correlation deviates from the i.i.d. Gaussian
distribution assumed by the pretrained diffusion model, it can degrade generation
quality when the model is only lightly adapted via fine-tuning (e.g. using LoRA). 
We address this next with a 3D-aware whitening procedure that reduces local spatial correlations 
from the rasterized noise while preserving its cross-view and temporal consistency.

\endgroup

\subsection{View-Consistent Whitening} \label{sec:view_consistent_whitening}

\providecommand{\wcBlobNoise}[3]{%
  \begin{scope}
    \clip (0,0) rectangle (#1,#2);
    \fill[black!12] (0,0) rectangle (#1,#2);
    \pgfmathsetseed{#3}%
    \foreach \i in {1,...,80}{%
      \pgfmathsetmacro{\xx}{rnd*#1}%
      \pgfmathsetmacro{\yy}{rnd*#2}%
      \pgfmathsetmacro{\rr}{0.07+rnd*0.13}%
      \pgfmathsetmacro{\gv}{int(8+rnd*80)}%
      \fill[black!\gv,opacity=0.8] (\xx,\yy) circle (\rr);%
    }
  \end{scope}
}

\providecommand{\wcWhiteNoise}[3]{%
  \begin{scope}
    \clip (0,0) rectangle (#1,#2);
    \pgfmathsetseed{#3}%
    \foreach \i in {0,...,29}{%
      \foreach \j in {0,...,21}{%
        \pgfmathsetmacro{\gv}{int(rnd*92)}%
        \fill[black!\gv] (\i*0.0667,\j*0.0625) rectangle ++(0.0672,0.063);%
      }
    }
  \end{scope}
}

\providecommand{\wcPosField}[2]{%
  \begin{scope}
    \clip (0,0) rectangle (#1,#2);
    \fill[white] (0,0) rectangle (#1,#2);
    \fill[cyan!45,opacity=0.85]    (0.22*#1,0.74*#2) circle (0.55*#2);
    \fill[magenta!40,opacity=0.75] (0.70*#1,0.62*#2) circle (0.50*#2);
    \fill[yellow!60,opacity=0.70]  (0.46*#1,0.20*#2) circle (0.46*#2);
    \fill[green!45,opacity=0.55]   (0.90*#1,0.16*#2) circle (0.34*#2);
  \end{scope}
}

\providecommand{\wcStack}[3]{%
  \foreach \k in {#3,...,1}{%
    \filldraw[fill=white,draw=black!45,line width=0.4pt]
      (\k*0.075,\k*0.075) rectangle ++(#1,#2);%
  }
}

\providecommand{\wcPixel}[3]{%
  \fill[orange!90!black] (#1,#2) circle (0.055);
  \draw[orange!90!black,line width=0.5pt] (#1,#2) circle (0.135);
  \node[font=\tiny\bfseries,text=orange!75!black,fill=white,
        fill opacity=0.85,text opacity=1,inner sep=0.5pt]
    at (#1+0.24,#2+0.20) {#3};
}

\begin{figure}[t]
\centering
\resizebox{.8\textwidth}{!}{%
\begin{tikzpicture}[
    font=\small,
    >={Stealth[round,length=2.2mm]},
    frame/.style={draw=black!55,line width=0.5pt},
    flow/.style={->,line width=0.75pt,black!65},
    share/.style={->,line width=0.8pt,blue!55!black},
    clab/.style={align=center,inner sep=1pt,font=\scriptsize},
    tag/.style={align=center,inner sep=1pt,font=\scriptsize,text=black!60},
  ]

  \draw[dashed,draw=black!45,line width=0.5pt] (0,-1.50) rectangle (2.70,1.50);
  \node[tag,anchor=north west] at (0.05,1.46) {cubemap};
  \pgfmathsetseed{21}
  \foreach \i in {1,...,34}{%
    \pgfmathsetmacro{\xx}{0.45+rnd*1.80}
    \pgfmathsetmacro{\yy}{-0.95+rnd*1.90}
    \pgfmathsetmacro{\rr}{0.10+rnd*0.12}
    \pgfmathsetmacro{\hh}{rnd}
    \definecolor{blobc}{hsb}{\hh,0.55,0.92}
    \fill[blobc,opacity=0.9] (\xx,\yy) ellipse (\rr*1.25 and \rr);
  }
  \node[clab,anchor=north] at (1.35,-1.58)
    {canonical 3D scene\\[-1pt]$\mathcal{G}=\{\bar{\boldsymbol{\mu}}_i,\mathbf{z}_i\}$,\ \
     $\mathbf{z}_i\!\sim\!\mathcal{N}(\mathbf{0},\mathbf{I}_H)$};

  \begin{scope}[shift={(3.30,1.55)}]
    \filldraw[fill=blue!8,frame] (0,0) -- (0.50,0.34) -- (0.50,-0.34) -- cycle;
    \node[tag,anchor=south] at (0.25,0.40) {$\boldsymbol{\Pi}^{\mathrm{tar}}_{1}$};
  \end{scope}
  \begin{scope}[shift={(3.30,-1.55)}]
    \filldraw[fill=blue!8,frame] (0,0) -- (0.50,0.34) -- (0.50,-0.34) -- cycle;
    \node[tag,anchor=south] at (0.25,0.40) {$\boldsymbol{\Pi}^{\mathrm{tar}}_{2}$};
  \end{scope}
  \draw[flow] (2.75, 0.45) -- (3.24, 1.40);
  \draw[flow] (2.75,-0.45) -- (3.24,-1.40);

  \begin{scope}[shift={(4.30,0.90)}]
    \wcStack{1.95}{1.30}{3}
    \begin{scope}[shift={(0.075,0.075)}]
      \wcBlobNoise{1.95}{1.30}{13}
      \draw[frame] (0,0) rectangle (1.95,1.30);
      \wcPixel{0.663}{0.819}{$p$}
    \end{scope}
    \node[clab,anchor=north] at (1.05,-0.10)
      {$\widetilde{\mathbf{Y}}^{(1)}\!\in\!\mathbb{R}^{H}$\\[-1pt]
       Eq.~\ref{eq:normalized_noise_feature}};
  \end{scope}
  \node[circle,frame,fill=blue!8,minimum size=0.62cm] (projA) at (7.80,1.55) {$\otimes$};
  \node[tag,anchor=south] at (7.80,1.90) {$\mathbf{Q}_p^{\!\top}(\cdot)$};
  \draw[flow] (6.48,1.55) -- (projA.west);
  \begin{scope}[shift={(8.40,0.90)}]
    \wcStack{1.95}{1.30}{2}
    \begin{scope}[shift={(0.075,0.075)}]
      \wcWhiteNoise{1.95}{1.30}{13}
      \draw[frame] (0,0) rectangle (1.95,1.30);
      \wcPixel{0.663}{0.819}{$p$}
    \end{scope}
    \node[clab,anchor=north] at (1.05,-0.10)
      {$\boldsymbol{\epsilon}^{\mathrm{3D},(1)}\!\in\!\mathbb{R}^{C}$\\[-1pt]
       Eq.~\ref{eq:final_structured_noise}};
  \end{scope}
  \draw[flow] (projA.east) -- (8.36,1.55);

  \begin{scope}[shift={(4.30,-2.20)}]
    \wcStack{1.95}{1.30}{3}
    \begin{scope}[shift={(0.075,0.075)}]
      \wcBlobNoise{1.95}{1.30}{29}
      \draw[frame] (0,0) rectangle (1.95,1.30);
      \wcPixel{1.092}{0.676}{$q$}
    \end{scope}
    \node[clab,anchor=north] at (1.05,-0.10) {$\widetilde{\mathbf{Y}}^{(2)}$};
  \end{scope}
  \node[circle,frame,fill=blue!8,minimum size=0.62cm] (projB) at (7.80,-1.55) {$\otimes$};
  \draw[flow] (6.48,-1.55) -- (projB.west);
  \begin{scope}[shift={(8.40,-2.20)}]
    \wcStack{1.95}{1.30}{2}
    \begin{scope}[shift={(0.075,0.075)}]
      \wcWhiteNoise{1.95}{1.30}{29}
      \draw[frame] (0,0) rectangle (1.95,1.30);
      \wcPixel{1.092}{0.676}{$q$}
    \end{scope}
    \node[clab,anchor=north] at (1.05,-0.10) {$\boldsymbol{\epsilon}^{\mathrm{3D},(2)}$};
  \end{scope}
  \draw[flow] (projB.east) -- (8.36,-1.55);

  \node[clab,anchor=south,text=black!75] at (5.35,2.48)
    {\textbf{spatially correlated}\\[-1pt]$\rho_{pq}\!\neq\!0$ for nearby $p,q$};
  \node[clab,anchor=south,text=black!75] at (9.45,2.42)
    {\textbf{decorrelated}\\[-1pt]$\operatorname{Cov}(\boldsymbol{\epsilon}^{\mathrm{3D}}_p)\!=\!\mathbf{I}_C$};

  \draw[rounded corners=2pt,draw=blue!45!black,line width=0.5pt,
        dash pattern=on 2pt off 1.5pt] (2.85,-4.90) rectangle (8.55,-2.80);
  \node[clab,anchor=south east,text=blue!45!black] at (8.55,-2.74)
    {shared by all views};

  \begin{scope}[shift={(3.00,-4.03)}]
    \wcPosField{1.50}{1.10}
    \draw[frame] (0,0) rectangle (1.50,1.10);
    \node[clab,anchor=north] at (0.75,-0.08)
      {$\mathbf{X}_p$\\[-1pt]Eq.~\ref{eq:canonical_position}};
  \end{scope}
  \begin{scope}[shift={(4.95,-4.03)}]
    \filldraw[fill=white,frame] (0,0) rectangle (1.00,1.10);
    \draw[black!70,line width=0.6pt,domain=0:1,samples=60,smooth,variable=\x]
      plot ({0.05+0.90*\x},{0.85+0.14*sin(\x*1080)});
    \draw[black!70,line width=0.6pt,domain=0:1,samples=60,smooth,variable=\x]
      plot ({0.05+0.90*\x},{0.53+0.14*sin(\x*2160+40)});
    \draw[black!70,line width=0.6pt,domain=0:1,samples=60,smooth,variable=\x]
      plot ({0.05+0.90*\x},{0.21+0.14*sin(\x*3240+90)});
    \node[clab,anchor=north] at (0.50,-0.08)
      {$\mathbf{A}_p$\\[-1pt]Eq.~\ref{eq:projection_preconditioning}};
  \end{scope}
  \begin{scope}[shift={(6.40,-4.03)}]
    \pgfmathsetseed{55}
    \foreach \i in {0,...,3}{\foreach \j in {0,...,7}{%
      \pgfmathsetmacro{\gv}{int(10+rnd*75)}
      \fill[black!\gv] (\i*0.135,\j*0.1375) rectangle ++(0.136,0.138);}}
    \draw[frame] (0,0) rectangle (0.54,1.10);
    \node[clab,anchor=north] at (0.27,-0.09) {$\mathbf{A}_p$\\[-1pt]$H\!\times\!C$};
  \end{scope}
  \begin{scope}[shift={(7.53,-4.03)}]
    \foreach \i in {0,...,3}{\foreach \j in {0,...,7}{%
      \pgfmathsetmacro{\gv}{int(14+16*mod(\i+\j,4))}
      \fill[blue!\gv!white] (\i*0.135,\j*0.1375) rectangle ++(0.136,0.138);}}
    \draw[frame] (0,0) rectangle (0.54,1.10);
    \node[clab,anchor=north] at (0.27,-0.08)
      {$\mathbf{Q}_p$\\[-1pt]Eq.~\ref{eq:projection_qr}};
  \end{scope}
  \draw[flow] (4.55,-3.48) -- (4.90,-3.48);
  \draw[flow] (6.00,-3.48) -- (6.35,-3.48)
    node[midway,above,font=\scriptsize,text=black!70,inner sep=1pt] {$f$};
  \draw[flow] (6.99,-3.48) -- (7.48,-3.48)
    node[midway,above,font=\scriptsize,text=black!70,inner sep=1pt] {QR};

  \draw[flow] (1.35,-2.14) -- (1.35,-3.48) -- (2.95,-3.48);
  \draw[share] (7.80,-2.90) -- (projB.south);
  \draw[share] (projB.north) -- (projA.south);

  \draw[decorate,decoration={brace,amplitude=4pt},draw=orange!80!black,
        line width=0.6pt] (10.62,2.30) -- (10.62,-2.08);
  \node[clab,anchor=west,text=orange!55!black,text width=2.15cm] at (10.84,0.11)
    {\textbf{view\\consistency}\\[2pt]
     $p$ and $q$ see the same $\mathbf{X}$, hence the same $\mathbf{Q}$
     and matching noise};

\end{tikzpicture}%
}
\caption{\textbf{View-consistent whitening}
Noise vectors $\mathbf{z}_i$ are attached to 3D primitives and the cubemap;
rasterization therefore provides cross-view correspondence, while overlapping
footprints induce local correlation. We reduce this correlation with a per-pixel
orthonormal transformation $\mathbf{Q}_p$ derived from the rasterized \emph{canonical
anchor} position $\mathbf{X}_p$ using random Fourier features and reduced QR
(dashed box). Because $\mathbf{Q}_p$ depends on $\mathbf{X}_p$ rather than
screen coordinates, pixels observing the same surface point share the transformation across views and over time. The resulting noise pattern (right) is approximately
i.i.d.\ Gaussian within each frame while remaining cross-view synchronized.}
\label{fig:view_consistent_whitening}
\vspace{-8pt}
\end{figure}

The rasterized noise from Sec.~\ref{sec:canonical_3d_noise} is locally correlated
because neighboring pixels may share several Gaussian primitives. We reduce this
correlation with a spatially varying projection indexed by canonical 3D position
rather than screen coordinates. Consequently, corresponding surface points in
different views and at different times use approximately the same projection,
while nearby pixels with distinct canonical positions can use different
projections.

To obtain the canonical 3D location of pixel $p$, we treat each canonical anchor position
$\bar{\boldsymbol{\mu}}_i$ as a rasterized attribute, in the same way that color is rasterized in
Eq.~\plaineqref{eq:gs_cubemap_composition}. This gives the canonical-coordinate map in screen space:
\begin{equation}
     \mathbf{X}_p = \frac{ \sum_{i=1}^{N} w_{p,i}\bar{\boldsymbol{\mu}}_i }{
    \sum_{i=1}^{N} w_{p,i}+\varepsilon } \in \mathbb{R}^{3}.
    \label{eq:canonical_position}
\end{equation}
The weights $w_{p,i}$ come from the deformed geometry at time $t$, so $\mathbf{X}_p$
describes the observed surface in canonical rather than time-varying world space.
Using $\boldsymbol{\mu}_i(t)$ would make a moving surface traverse the transformation
field and change its whitened noise over time; canonical anchors instead tie
$\mathbf{X}_p$ to primitive identity.

For regions outside the source-video coverage that contain no reconstructed primitives, we get
$\mathbf{X}_p$ from the ray--box intersection $\mathbf{X}^{\mathrm{cube}}_p$ defined in
Eq.~\plaineqref{eq:cubemap_intersection}; corresponding pixels therefore
approximately share the same canonical coordinate. 

The rasterized coordinate map from Eq.~\ref{eq:canonical_position} has only
three channels, representing 3D positions projected into the camera plane.
Since the rasterized noise usually has many more channels than three,
we construct a higher-dimensional representation of $\mathbf{X}_p$ using fixed Random Fourier Features (RFF)
~\cite{tancik2020fourier}. The implementation forms the projection frame directly
from fixed frequency vectors and per-output-channel phases:
\begin{equation}
     [\mathbf{A}_p]_{a,c} =
     \sqrt{\frac{2}{H}}\,
     \sin\!\left(2\pi s\,\boldsymbol{\omega}_a^\top\mathbf{X}_p
     +\phi_{a,c}\right), \qquad
     \mathbf{A}_p\in\mathbb{R}^{H\times C}, \label{eq:projection_preconditioning}
\end{equation}
where $\boldsymbol{\omega}_a\in\mathbb{R}^3$ are fixed frequency vectors,
$\phi\in[0,2\pi)^{H\times C}$ contains fixed phases, and $s$ is a scene-dependent
frequency scale. The resulting matrix provides multiple fixed sinusoidal
responses to the same canonical position. Because the frequencies and phases
are view- and time-independent, the resulting representation is consistent
across views and over time.
We compute the reduced QR factorization:
\begin{equation}
     \mathbf{A}_p = \mathbf{Q}_p\mathbf{R}_p, \qquad \mathbf{Q}_p^\top\mathbf{Q}_p =
    \mathbf{I}_C, \label{eq:projection_qr}
\end{equation}
and use $\mathbf{Q}_p$ as the local transformation which is applied on rasterized noise $\widetilde{\mathbf{Y}}_p$ (see Figure~\ref{fig:view_consistent_whitening}). The frequency scale controls the
trade-off between decorrelation and stability: higher frequencies improve local
decorrelation but increase sensitivity to reconstruction and rasterization errors.
When canonical positions are sufficiently separated relative to the Fourier
wavelengths, the high-dimensional feature ($H\gg C$) gives neighboring
projection subspaces room to become nearly orthogonal while retaining shared
3D structure. The local-decorrelation behavior of this transformation field is
formalized in Appendix~\ref{sec:whitening_proof}, under an idealized assumption
that distinct canonical anchors induce generic, approximately independent
projection bases. The final structured noise at pixel $p$ is defined as
\begin{equation}
     \boldsymbol{\epsilon}_p^{\mathrm{3D}} = \mathbf{Q}_p^\top \widetilde{\mathbf{Y}}_p
    = \frac{ \mathbf{Q}_p^\top \sum_i w_{p,i}\mathbf{z}_i }{ \sqrt{\sum_i
    w_{p,i}^{2}+\varepsilon} } \in \mathbb{R}^{C}. \label{eq:final_structured_noise}
\end{equation}
Since $\mathbf{Q}_p$ has orthonormal columns, the marginal covariance is approximately
\begin{equation}
     \operatorname{Cov} \left( \boldsymbol{\epsilon}_p^{\mathrm{3D}} \right) =
    \mathbf{Q}_p^\top \mathbf{Q}_p = \mathbf{I}_C, \label{eq:marginal_whitening}
\end{equation}
up to the numerical stabilization and rasterization approximations in
Eq.~\plaineqref{eq:normalized_noise_feature}. Each projected noise vector therefore has
the standard Gaussian marginal expected by the pretrained diffusion model. The
remaining cross-pixel covariance is
\begin{equation}
     \operatorname{Cov} \left( \boldsymbol{\epsilon}_p^{\mathrm{3D}},
    \boldsymbol{\epsilon}_q^{\mathrm{3D}} \right) =
    \rho_{pq}\mathbf{Q}_p^\top\mathbf{Q}_q. \label{eq:projected_cross_covariance}
\end{equation}
Eq.~\plaineqref{eq:projected_cross_covariance} separates the correlation into the
primitive-overlap term $\rho_{pq}$ and the projection term
$\mathbf{Q}_p^\top\mathbf{Q}_q$. A varying canonical-position field (and consequently varying $\mathbf{A}_p$) can make the
latter small for nearby pixels, whereas a constant or screen-space-dependent field
would not provide view-consistent decorrelation. The method is therefore
\emph{view-consistent approximate whitening}, not exact covariance whitening:
shared primitives still prevent guaranteed independence, but the marginal remains
close to the pretrained model's standard Gaussian distribution.
The construction provides three properties. \emph{View consistency} follows because
corresponding views use approximately the same anchor blend, projection, and mixture
of primitive noise vectors. \emph{Temporal consistency} follows because tracked
primitives retain their anchors and noise vectors, so the whitened noise follows
the object rather than the image plane. The correspondence is exact when the
same primitives dominate the rasterization; visibility changes or primitive
re-instantiation can still introduce gradual variation in the noise mixture and
projection. \emph{Local decorrelation} follows because neighboring pixels
generally use different transformations.

\subsection{Synchronized Video Generation} \label{sec:synchronized_video_generation}

We perform synchronized video generation by LoRA fine-tuning a pretrained I2V
diffusion model while replacing its i.i.d. Gaussian noise with the view-consistent
whitened rasterized noise (Eq.~\ref{eq:final_structured_noise}).
This preserves the pretrained model's standard-Gaussian marginal noise statistics
while injecting a shared 3D correspondence signal across views and frames.
The video Diffusion Transformer (DiT) uses the Rectified Flow
framework~\cite{esser2024scaling}, where noising follows straight paths between the
clean latent distribution and the structured noise distribution:
\begin{equation}
\begin{aligned}
     \mathbf{x}_{\tau} &= (1-\tau)\mathbf{x}_{0}
     + \tau\boldsymbol{\epsilon}^{\mathrm{3D}}, \\
     d\mathbf{x}_{\tau} &= \mathcal{D}_{\theta,\phi}\!\left(
     \operatorname{Concat}_{T}\left(\mathbf{x}_{\tau}, \mathbf{x}^{c}\right),
     \tau\right)d\tau
\end{aligned}
\label{eq:forward_velocity}
\end{equation}

Here $\mathcal{D}_{\theta,\phi}$ denotes the DiT with frozen pretrained parameters
$\theta$ and learnable LoRA parameters $\phi$. The tensors
$\mathbf{x}_{0}, \boldsymbol{\epsilon}^{\mathrm{3D}} \in
\mathbb{R}^{t \times C \times h \times w}$ are the clean target-video VAE latent
and its associated rasterized noise, respectively. The conditioning
tokens $\mathbf{x}^{c}$ contain the source-video latent, the rendered RGB-video
latent, or both, and are temporally concatenated to the target tokens with extended
rotary position encoding. After the DiT blocks, the conditioning tokens are discarded
and the output head is applied only to the noised target tokens. Thus, the
conditioning stream provides context but is never denoised and never enters the
training objective. Only the LoRA parameters $\phi$ are optimized.

\label{sec:method}

\section{Experiments}
\label{sec:experiments}

\subsection{Datasets} \label{sec:datasets}
Because real-world datasets rarely provide synchronized videos of dynamic scenes, we
train on synthetic dynamic videos and real-world static videos. For synthetic
training, we use 36K video pairs from MultiCamVideo (MCV), introduced by
ReCamMaster~\cite{bai2025recammaster}, which provides synchronized videos and
camera trajectories rendered from dynamic scenes in Unreal Engine~5. To reduce the
synthetic-to-real domain gap, we additionally sample 2K DL3DV-10K
videos~\cite{ling2024dl3dv} for real-world training.

DL3DV contains monocular videos of static scenes rather than synchronized
multi-view captures, so we construct pseudo-synchronized pairs via temporal
folding, following CVD~\cite{kuang2024cvd}. For each $2l-1$-frame clip, we split
at the center frame ($l+1$) and reverse the first segment, yielding two segments
that share their starting frame and then follow different temporal trajectories.
Since the scenes are static, this reversal is physically valid, and each folded
pair provides two temporally aligned observations of the same scene along distinct
trajectories, with separate associated camera poses. Applying this procedure to
the first two buckets of DL3DV yields about 13.5K video pairs. 

\textbf{Multi-ViewBench.} For evaluation on real-world videos we curate \emph{Multi-ViewBench}, 
a 200-clip benchmark drawn from the Mixkit subset of Open-Sora-Plan~\cite{lin2024opensoraplan}. 
Because reshooting a real video admits no ground-truth target view, the composition of such a set 
is easy to bias by hand and hard for others to reproduce, so we select it automatically: 
a vision-language model judges every candidate clip once against six fixed admissibility 
gates and four semantic attributes, and every subsequent decision is a deterministic function of 
those recorded judgements. Admissible clips must be a single continuous real-time shot, landscape, 
and captured by a physically moving camera; survivors are ranked by a \emph{richness} index rewarding 
depth layering, occlusion, and non-rigid motion. We report two disjoint splits of 100 clips each, 
defined by the model's composition judgement: \emph{standard} holds the richest single-subject scenes, 
and \emph{complex} the richest multi-subject scenes with cluttered geometry, drawn from a category-balanced 
pool so that no Mixkit category monopolizes the candidates. The complex split is the harder of the two 
by construction, with a mean richness of 71.2 against 30.9 for standard. We release the code, 
the per-clip judgements, and the frozen split CSVs; Sec.~\ref{sec:eval_curation} documents the pipeline, 
the per-stage counts, and its limitations in full.

All training and evaluation videos are standardized to 5 sec. clips at 15\,fps and a resolution of 480P.

\subsection{Metrics} \label{sec:metrics}
\vspace{-3pt}
We evaluate cross-view synchronization, visual quality, camera accuracy, and perceptual consistency.
Synchronization uses high-confidence matched pixels in corresponding regions (Mat.\ Pix.),
cross-view FVD (FVD-V), and cross-view CLIP similarity (CLIP-V). Temporal consistency uses frame 
consistency (CLIP-F); following ReCamMaster~\cite{bai2025recammaster},
camera accuracy is measured by rotation and translation errors (RotErr/TransErr) between the
conditioning and recovered trajectories. We also report six no-reference VBench~\cite{huang2024vbench}
scores for aesthetic and imaging quality, temporal flickering and motion smoothness, and
subject/background consistency. Finally, a blind pairwise study presents the source and two
methods' renderings of the same target trajectory in randomized left/right order; participants
select the better result or no preference, covering 50 / 200 comparisons on two disjoint sets of
25 \textbf{Multi-ViewBench} scenes (SyncVid vs. Artifixer and SyncVid vs. Reshoot-Anything).

\subsection{Main results} \label{sec:results}
\vspace{-2pt}
\setlength{\floatsep}{8pt plus 1pt minus 1pt}
\setlength{\textfloatsep}{8pt plus 1pt minus 1pt}
\setlength{\intextsep}{8pt plus 1pt minus 1pt}

\begin{table}[t]
  \centering
  \caption{\textbf{Quantitative comparison on view synchronization, 
  temporal consistency and camera accuracy} on Multi-ViewBench videos. 
  Metrics are defined in Sec.~\ref{sec:metrics}. 
  }
  \label{tab:main_results_real}
  \resizebox{\textwidth}{!}{%
  \begin{tabular}{l|c|cccc|c|cc}
  \toprule
  & & \multicolumn{4}{c|}{View Synchronization} & Temporal Consistency & \multicolumn{2}{c}{Camera Accuracy} \\
  \cmidrule(lr){3-6} \cmidrule(lr){7-7} \cmidrule(lr){8-9}
  Method & \shortstack{Generation\\Length} & \multicolumn{2}{c}{Mat.\ Pix.\ $\uparrow$} &
  FVD-V $\downarrow$ & CLIP-V $\uparrow$ & CLIP-F $\uparrow$ & RotErr $\downarrow$ & TransErr $\downarrow$ \\
  \cmidrule(lr){3-4}
  & & Dynamic & Static & & & & & \\
  \midrule
  DaS~\cite{gu2025das}                               & 49 &  494 & 1251 & 315.69 & 86.01 & 97.40 & 0.1549 & 0.2164 \\
  Go-with-the-Flow~\cite{burgert2025gowiththeflow}   & 49 &  687 & 1277 & 296.30 & 87.44 & 96.78 & 0.1225 & 0.1109 \\
  ReCamMaster~\cite{bai2025recammaster}              & 75 & 1073 & 1428 & 233.70 & 88.06 & 97.39 & 0.1665 & 0.1778 \\
  ArtiFixer-14B~\cite{delutio2026artifixer}          & 75 & 1128 & 1937 & 195.55 & 88.99 & 97.55 & \textbf{0.0780} & \textbf{0.0529} \\
  Reshoot-Anything~\cite{paliwal2026reshootanything} & 69 & 1268 & \textbf{1989} & 202.22 & 89.96 & 97.82 & 0.1120 & 0.0838 \\
  \midrule
  \textbf{SyncVid (Ours)}                                      & \textbf{75} & \textbf{1332} & 1834 & \textbf{146.99} & \textbf{91.66} & \textbf{97.87} & 0.151 & 0.121 \\
  \bottomrule
  \end{tabular}}
\end{table}

\begin{table}[t]
  \centering
  \caption{\textbf{Quantitative comparison on VBench~\cite{huang2024vbench} metrics}.}
  \label{tab:main_results_vbench}
  \resizebox{\textwidth}{!}{%
  \begin{tabular}{lcccccc}
  \toprule
  Method & \shortstack{Aesthetic\\Quality $\uparrow$} & \shortstack{Imaging\\Quality $\uparrow$} &
  \shortstack{Temporal\\Flickering $\uparrow$} & \shortstack{Motion\\Smoothness $\uparrow$} & \shortstack{Subject\\Consistency
  $\uparrow$} & \shortstack{Background\\Consistency $\uparrow$} \\
  \midrule
  DaS~\cite{gu2025das}                               & 49.15 & 50.34 & 96.70 & 98.54 & 84.74 & 90.81 \\
  Go-with-the-Flow~\cite{burgert2025gowiththeflow}   & 49.91 & 51.98 & 95.63 & 98.00 & 84.87 & 90.69 \\
  ReCamMaster~\cite{bai2025recammaster}              & 48.12 & 49.51 & \textbf{97.22} & \textbf{98.89} & 85.41 & 89.91 \\
  ArtiFixer~\cite{delutio2026artifixer}              & 48.10 & 52.61 & 96.54 & 98.81 & 85.67 & 90.41 \\
  Reshoot-Anything~\cite{paliwal2026reshootanything} & 47.85 & 51.72 & 96.16 & 98.45 & 85.94 & \textbf{92.01} \\
  \midrule
  \textbf{SyncVid (Ours)} & \textbf{52.07} & \textbf{56.03} & 96.45 & 98.71 & \textbf{88.66} & 91.24 \\
  \bottomrule
  \end{tabular}}

\captionof{table}{\textbf{Human preference evaluations.} Percentage of blind pairwise comparisons.}
\label{tab:user-study}
\resizebox{0.9\linewidth}{!}{ \begin{tabular}{cccccc} \toprule SyncVid (Ours) & Tie & ArtiFixer~\cite{delutio2026artifixer} & SyncVid (Ours) & Tie & Reshoot-Any.~\cite{paliwal2026reshootanything} \\ \cmidrule(lr){1-3} \cmidrule(lr){4-6} \textbf{78}\% & 15\% & 7\% & \textbf{47}\% & 31\% & 22\% \\ \bottomrule \end{tabular} }
\end{table}

\vspace{-0.5em}
\begin{figure}[t]
\centering
\IfFileExists{floats/figures/qual_comparison/qual_comparison_mixkit-artistic-dance-of-a-woman-with-a-fire-saber-43667_arc_left.png}{%
  \includegraphics[width=\textwidth]{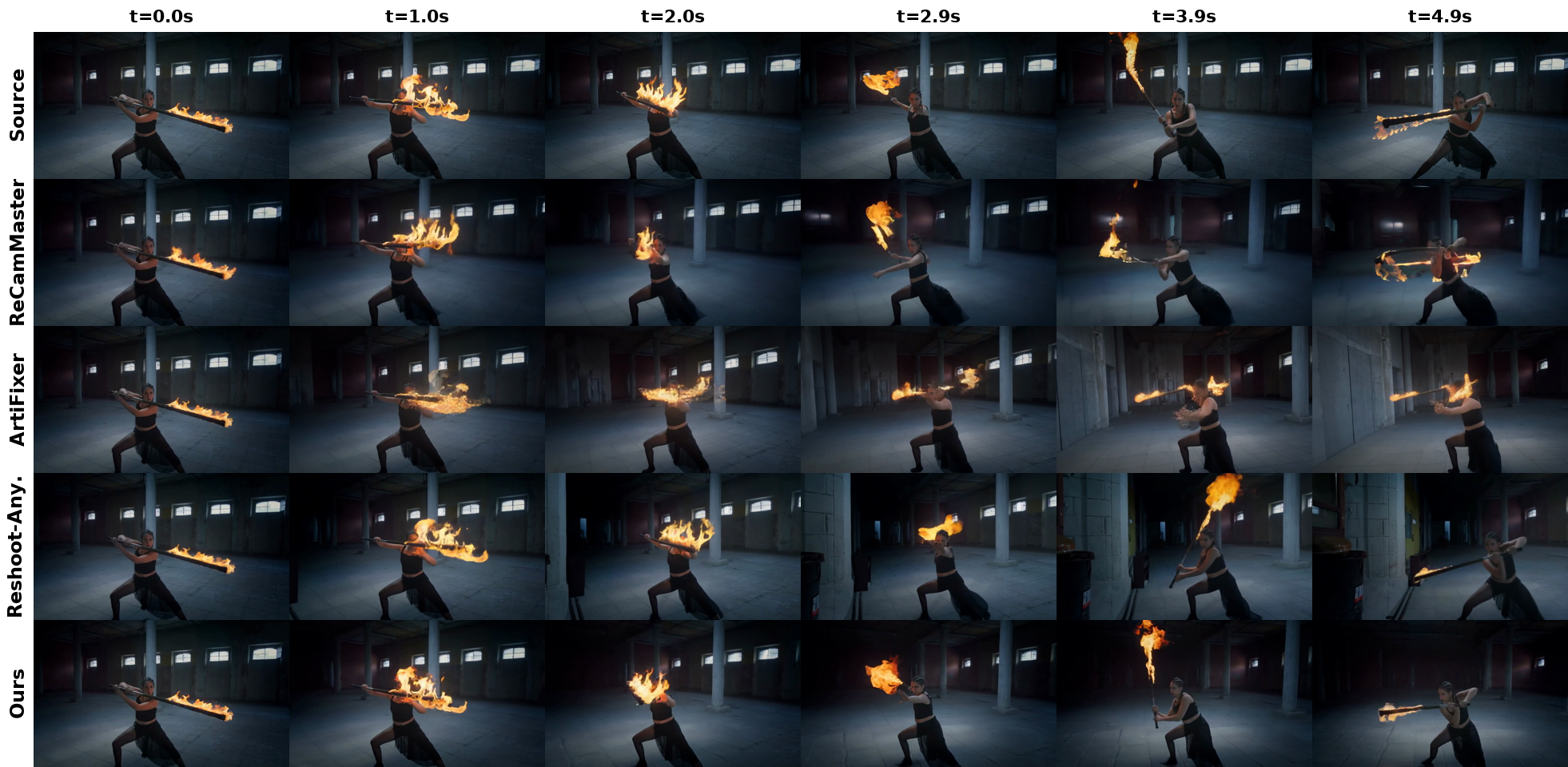}%
}{%
  \fbox{\parbox[c][3cm][c]{\textwidth}{\centering
    \texttt{floats/figures/qual\_comparison/qual\_comparison\_mixkit-artistic-dance-of-a-woman-with-a-fire-saber-43667\_arc\_left.png}
    not found --- run \texttt{make\_qual\_comparison.py} first.}}%
}
\vspace{-1.5em}
\caption{\textbf{Qualitative comparison} on a Multi-ViewBench example with an Arc-right trajectory.
}
\label{fig:qual_comparison}
\end{figure}

Following the evaluation protocol of ReCamMaster~\cite{bai2025recammaster}, we compare \name with recent state-of-the-art methods on the real-world Multi-ViewBench test set.
Table~\ref{tab:main_results_real} shows that our method provides the strongest overall
cross-view synchronization. It achieves the highest number of matched pixels in dynamic
regions (1,332) and the second-highest number in static regions (1,834), behind
Reshoot-Anything. It also obtains the lowest cross-view FVD (146.99) and the highest
cross-view CLIP similarity (91.66), indicating
that its generated views are both perceptually aligned and visually similar. In addition,
it achieves the highest temporal-consistency score (CLIP-F = 97.87). Camera accuracy is
not its main advantage: ArtiFixer obtains the lowest rotation and translation errors,
while our method remains competitive.

The VBench results in Table~\ref{tab:main_results_vbench} show a similar advantage in
perceptual quality. Our method achieves the highest aesthetic quality (52.07), imaging
quality (56.03), and subject consistency (88.66). Its temporal-flickering and
motion-smoothness scores are close to the best reported values, and its background
consistency (91.24) is second only to Reshoot-Anything. Overall, these results suggest
that the proposed view-consistent noise
improves cross-view alignment and content fidelity without substantially compromising
temporal or background stability. Table~\ref{tab:user-study} shows the user study results.

\subsection{Ablations} \label{sec:ablations}
\vspace{-2pt}
We ablate the main design choices of our method on a held-out set of OpenSora
videos, using the lighter Wan2.2-5B backbone to keep the study computationally
affordable. Results are reported in Table~\ref{tab:ablation_components}.
Whitening improves all four reported metrics over both standard noise and
unwhitened noise: it increases Mat.~Pix. from 222 to 267, reduces FVD-V from
518.77 to 461.44, and raises CLIP-V from 88.76 to 91.05, while slightly
improving CLIP-F. The conditioning ablation reveals a complementary trade-off.
Adding the source view yields the strongest direct synchronization, with the
highest Mat.~Pix. (319), CLIP-V (91.41), and CLIP-F (97.95). Using both the
rendered and source views instead gives the lowest FVD-V (461.44), suggesting
that the additional rendered view improves distributional visual similarity
even though it does not maximize pixel-level matching. We favor this configuration
because FVD-V, which compares the distributions of generated and reference views,
better reflects the overall quality of the generated videos than pixel-level matching.

LoRA~256 is the best rank on all four metrics, and the larger
DiT improves almost every metric. Our final model therefore uses whitening, joint
source- and rendered-view conditioning, LoRA~256, and the larger backbone.

\begin{table}[t]  
\centering 
\captionsetup{skip=2pt}
\caption{\textbf{Ablations.} Visual-quality and view-synchronization metrics on various factors.}\label{tab:ablation_components} 
\resizebox{0.7\textwidth}{!}{% 
\begin{tabular}{ll|ccc|c} 
\toprule 
\multicolumn{2}{c|}{Variant} & 
\multicolumn{3}{c|}{View Synchronization} & 
Temporal Consistency \\ 
\cmidrule(lr){3-5} 
\cmidrule(lr){6-6} & & Mat.\ Pix.\ $\uparrow$ & FVD-V $\downarrow$ & CLIP-V $\uparrow$ & CLIP-F $\uparrow$ \\ 
\midrule \multirow{2}{*}{Noise} 
& standard & 222 & 518.77 & 88.76 & 97.34 \\ 
& w/o whitening & 208 & 468.39 & 88.53 & 96.92 \\ 
& w/\hspace{0.5em} whitening & 267 & 461.44 & 91.05 & 97.61 \\ 
\midrule 
\multirow{4}{*}{Conditioning} & +none (I2V) & 224 & 830.00 & 87.53 & 97.80 \\ 
& +render view & 231 & 574.50 & 89.34 & 97.59 \\ 
& +source view & 319 & 530.42 & 91.41 & 97.95 \\ 
& +both & 267 & 461.44 & 91.05 & 97.61 \\ 
\midrule \multirow{4}{*}{Fine tuning} 
& LoRA 128 & 214 & 494.43 & 88.33 & 96.97 \\ 
& LoRA 256 & 267 & 461.44 & 91.05 & 97.61 \\ 
& LoRA 512 & 206 & 490.21 & 88.38 & 96.92 \\ 
\midrule \multirow{2}{*}{Backbone} 
& Wan2.1-1.3B & 217 & 689.30 & 88.23 & 97.25 \\ 
& Wan2.2-5B & 267 & 461.44 & 91.05 & 97.61 \\
& Wan2.1-14B & 308 & 442.20 & 90.88 & 97.49 \\ 
& Wan2.2-14B & 313 & 438.69 & 91.17 & 97.75 \\
\bottomrule 
\end{tabular}% 
} 
\end{table}
\vspace{-1em}

\section{Conclusion}
\label{sec:conclusion}
\vspace{-6pt}
We presented \name, a framework for synchronized novel-view video generation that coordinates diffusion
at the source of its stochasticity. By attaching persistent noise to a canonical 4D representation and
rasterizing it into target cameras, \name turns scene correspondence into a property of the diffusion
prior rather than a responsibility of image-plane warping or additional cross-view architecture. A
view-consistent whitening step keeps this structured prior compatible with the distribution expected by
the pretrained model, allowing the approach to be integrated through lightweight adaptation without
modifying the diffusion backbone.
These findings suggest that synchronized generation can benefit from treating noise as a world-level
representation: the same stochastic ingredients can follow content across viewpoints and time while
leaving the model free to synthesize newly revealed regions.

\clearpage
\bibliography{refs}

\appendix
\clearpage

\setcounter{page}{1}

\section{Supplementary}
\label{sec:supplementary}

\subsection{Automated Evaluation Set Curation for OpenSora}
\label{sec:eval_curation}

Reshooting a real video admits no ground-truth target view, so the only thing a
real-world benchmark can measure is behavior on the clips it happens to contain.
That makes the composition of the test set part of the result. A set assembled by
hand can flatter a method through nothing more than which scenes its author found
compelling, and it cannot be reproduced by anyone else. We therefore built
Multi-ViewBench, the 200-clip evaluation set used in Sec.~\ref{sec:results}, so
that every accept or reject decision is either a recorded model judgement or a
deterministic function of recorded judgements. No clip was ever chosen, dropped,
or reordered by inspection.

The pipeline has two stages. Stage~1 asks a vision-language model to judge each
candidate clip once and writes the judgement to disk. Stage~2 never calls a model
again: it filters, de-duplicates, and ranks using only the fields Stage~1 already
recorded, so the entire path from corpus to final CSV can be replayed offline from
the released judgements.

\subsubsection{Source corpus}
\label{sec:eval_corpus} 

Candidates come from the Mixkit subset of Open-Sora-Plan~v1.1.0~\cite{lin2024opensoraplan},
which contains 9,194 stock-footage \texttt{.mp4} clips filed under 40 topical
category folders (\texttt{forest}, \texttt{Street}, \texttt{Couple},
\texttt{Bicycle}, and so on). Two properties of this corpus shape the pipeline.
First, the category folders are editorial tags rather than a partition: the same
physical clip is frequently filed under several categories, so the corpus contains
substantially fewer distinct videos than files. Second, the clips are
heterogeneous in exactly the ways that matter for camera-controlled generation,
mixing locked-off tripod shots, time-lapses, portrait-orientation footage, and
heavy optical effects with the moving-camera material we actually want.

\subsubsection{Stage 1: structured vision-language judgement}
\label{sec:eval_stage1}

Each clip is judged independently by Gemini~3.1~Pro
(\texttt{gemini-3.1-pro-preview}), accessed through an OpenAI-compatible
inference gateway. We sample 12 uniformly spaced frames per clip, rescale each so
its longest side is at most 512 pixels, JPEG-encode them, and submit the sequence
with a fixed system prompt at temperature~0.2. The response is constrained to a
JSON schema, so the model cannot answer in prose or omit a field, and every reply
is machine-readable without parsing heuristics. One JSON file is written per clip,
which makes the pass resumable: an interrupted run skips clips whose output
already exists.

The schema, summarized in Table~\ref{tab:eval_schema}, has four parts. Six
boolean \emph{gates} encode hard admissibility requirements. A
\emph{composition bucket} assigns the clip to \texttt{simple} or \texttt{complex}
and is the field that later defines our two splits: \texttt{simple} denotes a
single subject under smooth tracking against a clean background, and
\texttt{complex} denotes multiple subjects, cluttered environments, distinct depth
layers, or dynamic backgrounds. Four \emph{semantic scores} rate the properties
that make a clip informative for view synthesis. Finally, three free-text
\emph{reasoning} fields record why the model answered as it did; these are
released, and Stage~2 consults them in exactly one narrow place
(Sec.~\ref{sec:eval_stage2}).

\begin{table}[t]
  \centering
  \caption{\textbf{Stage-1 judgement schema.} The model must return every field.
  Gates are hard requirements: a clip is admissible only if all six are true.
  The composition bucket defines the \emph{standard}/\emph{complex} split, and the
  semantic scores feed the richness index of Sec.~\ref{sec:eval_richness}.}
  \label{tab:eval_schema}
  \footnotesize
  \begin{tabular}{llp{0.44\textwidth}}
  \toprule
  Field & Range & Requirement or meaning \\
  \midrule
  \multicolumn{3}{l}{\emph{Gates} (all must be true)} \\
  \texttt{is\_single\_continuous\_shot} & bool & No cuts or transitions. \\
  \texttt{is\_real\_time\_playback}     & bool & No time-lapse, hyper-lapse, or slow motion. \\
  \texttt{is\_physically\_plausible}    & bool & No VFX blends, text overlays, or screen-in-screen. \\
  \texttt{is\_motion\_blur\_acceptable} & bool & No motion blur severe enough to destroy structure. \\
  \texttt{is\_focal\_blur\_acceptable}  & bool & Not excessively out of focus. \\
  \texttt{is\_depth\_of\_field\_acceptable} & bool & No extreme shallow depth of field. \\
  \midrule
  \multicolumn{3}{l}{\emph{Composition}} \\
  \texttt{composition\_bucket} & 2 labels & \texttt{simple} for a single tracked subject on a clean background, \texttt{complex} for multiple subjects, clutter, or depth layers. \\
  \midrule
  \multicolumn{3}{l}{\emph{Semantic scores}} \\
  \texttt{geometric\_parallax} & 1--5 & 1 is flat, 5 is deep multi-layered depth. \\
  \texttt{occlusion\_potential} & 1--5 & 1 is no meaningful occlusion, 5 is heavy self-occlusion or disocclusion. \\
  \texttt{camera\_motion} & 5 labels & \texttt{static}, \texttt{dynamic}, or one of three optical-effect labels. \\
  \texttt{non\_rigid\_motion} & 4 labels & \texttt{none}, \texttt{low}, \texttt{mid}, or \texttt{high} object articulation. \\
  \midrule
  \multicolumn{3}{l}{\emph{Reasoning} (released, used only as described in Sec.~\ref{sec:eval_stage2})} \\
  \texttt{gates\_reasoning} & text & Justification for the six gate verdicts. \\
  \texttt{composition\_reasoning} & text & Justification for the composition bucket. \\
  \texttt{scores\_reasoning} & text & Justification for the four semantic scores. \\
  \bottomrule
  \end{tabular}
\end{table}

\subsubsection{The richness index}
\label{sec:eval_richness}

Gates decide admissibility, but among admissible clips we still need an ordering,
and we want it to prefer clips that actually exercise synchronized generation. A
clip is informative to the degree that it has depth to be parallaxed, geometry
that occludes and disoccludes as the viewpoint moves, a camera that moves at all,
and subjects that deform rather than translate rigidly. We combine the four
semantic scores into a single \emph{richness index}
\begin{equation}
  \label{eq:richness}
  \rho \;=\; \underbrace{g}_{\text{parallax}} \cdot \underbrace{o}_{\text{occlusion}}
       \cdot \; w_{\mathrm{cam}} \cdot w_{\mathrm{nr}},
\end{equation}
where $g, o \in \{1,\dots,5\}$ are \texttt{geometric\_parallax} and
\texttt{occlusion\_potential}, and the two weights are read off the categorical
labels: $w_{\mathrm{cam}} = 2.0$ for \texttt{dynamic}, $1.2$ for any optical-effect
label, and $1.0$ for \texttt{static}; $w_{\mathrm{nr}} = 1.0, 1.2, 1.5, 2.0$ for
\texttt{none}, \texttt{low}, \texttt{mid}, \texttt{high} respectively. Richness
ranges over $[1, 100]$.

The product form is deliberate. Depth without occlusion, or occlusion without
camera motion, describes a clip that is easy for a single-view model and therefore
uninformative here, and a product refuses to compensate for a low factor with a
high one. Because Stage~2 admits only moving-camera clips,
$w_{\mathrm{cam}} = 2.0$ holds throughout the final pool and richness is
effectively $2 g o \, w_{\mathrm{nr}}$ there.

\subsubsection{Stage 2: deterministic filtering}
\label{sec:eval_stage2}

Stage~2 reduces the scored corpus to a candidate pool through five filters applied
in a fixed order. Table~\ref{tab:eval_funnel} reports the count surviving each.

\textbf{Gate pass.} A clip is kept only if all six gates are true.

\textbf{Landscape.} We probe each survivor with a video decoder and keep clips
whose frame width exceeds their height, discarding portrait and square footage
that no baseline is trained to reshoot. This is the only filter that reads pixels;
frame dimensions are cached so reruns do not repeat it.

\textbf{De-duplication.} Because one clip may be filed under several categories,
Stage~1 scored each copy independently, and copies routinely disagree with one
another. We collapse to one row per unique file name, keeping the highest-richness
copy and breaking ties deterministically on (category, judgement filename) so the
outcome does not depend on directory iteration order. The discarded copies'
categories are preserved in an \texttt{alias\_categories} field rather than
thrown away.

\textbf{Static-source removal.} A clip whose camera body never moves gives a
camera-controlled model no viewpoint baseline to reshoot from, so we keep only
physically moving cameras. The \texttt{camera\_motion} label carries this
directly for \texttt{static} and \texttt{dynamic}. The optical-effect labels are
genuinely ambiguous, since a rack focus or lens flare is usually captured on a
locked-off tripod but occasionally accompanies a real dolly move. This is the one
place where the reasoning text is consulted: a regular expression looks for
explicit camera-body motion (pan, tilt, track, dolly, handheld) to adjudicate the
optical-effect clips, and to demote the rare \texttt{dynamic} clip whose own
justification states the camera is stationary. The text is never used to overrule
an unambiguous label.

\textbf{Optical-effect removal.} Any clip still carrying a
\texttt{focal\_effects}, \texttt{aperture\_effects}, or \texttt{sensor\_effects}
label is dropped outright, since lens-driven apparent motion is not viewpoint
change.

\begin{table}[t]
  \centering
  \caption{\textbf{Curation funnel.} Counts at the time the benchmark was frozen,
  when 7,561 of the 9,194 corpus clips had been scored. De-duplication runs after
  the gate and landscape filters but before static-source classification; see
  Sec.~\ref{sec:eval_ordering} for why the order matters.}
  \label{tab:eval_funnel}
  \small
  \begin{tabular}{llrl}
  \toprule
  \# & Stage & Surviving & Removed \\
  \midrule
    & Stage-1 judgements available            & 7,561 & \\
  1 & Gate pass (all six gates true)          & 6,617 & 944 gate failures \\
  2 & Landscape (width $>$ height)            & 5,973 & 644 portrait or square \\
  3 & De-duplication (unique clips)           & 4,541 & 1,432 copies of 1,197 clips \\
  4 & Moving camera body                      & 2,231 & 2,310 static-source \\
  5 & Optical-effect removal                  & \textbf{2,227} & 4 optical-effect \\
  \midrule
    & \quad of which \texttt{complex}         & 1,823 & \\
    & \quad of which \texttt{simple}          &   404 & \\
  \bottomrule
  \end{tabular}
\end{table}

\subsubsection{Why the filter order matters}
\label{sec:eval_ordering}

Two orderings in Table~\ref{tab:eval_funnel} are load-bearing rather than
incidental.

De-duplication runs \emph{after} the gates so that a copy which fails a gate can
never win the richness comparison and readmit a clip that should have been
rejected. It runs \emph{before} static-source classification so that each physical
clip receives exactly one verdict. Were the order reversed, a clip could enter the
pool through whichever of its category copies happened to be scored most
favorably, which would let editorial filing decisions leak into the benchmark.

Static-source removal precedes optical-effect removal because the two overlap: the
adjudication step already reclassifies most optical-effect clips as static, so only
4 clips survive to be dropped by the explicit optical filter. Reversing the two
would discard the handful of optical-effect clips that do accompany real camera
motion.

\subsubsection{Constructing the two splits}
\label{sec:eval_splits}

The pool is strongly unbalanced, holding 1,823 complex clips against 404 simple
ones, because static-source removal strips proportionally more single-subject
tracking shots. The two splits are therefore built differently, and deliberately
so.

\textbf{Standard (100 clips).} Simple compositions are the comparatively solved
case, so this split is a compact reference rather than half the benchmark, and its
specification is exactly \emph{take the 100 highest-richness simple clips}. No
category cap and no diversity heuristic apply. With only 404 candidates, imposing
balance here would trade away richness for very little diversity.

\textbf{Complex (100 clips).} A pure richness sort over 1,823 complex clips
collapses onto whichever categories are largest, and the supply is severely skewed
(150 clips for \texttt{forest} against 1 for \texttt{Cats}). We therefore
construct this split in two steps. First we select a category-balanced set of
1,000 clips in which balance is a hard constraint and richness is maximized only
within it. Because an equal quota of $1000/40 = 25$ per category is unavailable
from most categories, we allocate quotas by max-min fair water-filling: give every
category the same quota, let categories that cannot meet it contribute everything
they have, and redistribute the shortfall equally among categories that still have
supply, repeating until the budget is spent. This maximizes the smallest
category's share, then the next smallest, and degenerates to an equal split when
supply permits. Each category then contributes its own richest clips up to its
quota, so balance is never traded for richness. The resulting 1,000 clips span all
40 categories with between 1 and 38 clips each, a mean of 25.0, and a normalized
category entropy of 0.940 against a maximum of 1. Second, we take the 100
highest-richness clips from this balanced set to form the released split.

The two-step construction is worth stating plainly, because balance and richness
pull against each other and the second step partially undoes the first. Balancing
at 1,000 determines \emph{which} clips are available to be ranked, preventing the
fat categories from monopolizing the candidate set; truncating to the richest 100
then concentrates on 19 of the 40 categories. What balance buys is that the final
100 is drawn from a category-fair pool rather than from a raw richness sort over
the whole corpus, not that the final 100 is itself uniform over categories.

Both selections sort by richness descending with ties broken deterministically on
(category, file name), and in both cases the cut falls inside a tie group: the
standard split cuts within the $\rho = 24$ tier, and the complex split takes 39
clips above $\rho = 64$ and then 61 of the 210 clips at exactly $\rho = 64$. The
tie-break rule is what makes these selections reproducible, and it is the reason
the released CSVs are byte-identical across reruns.

\subsubsection{Resulting statistics}
\label{sec:eval_stats}

Table~\ref{tab:eval_splits} characterizes the two released splits. They are
disjoint by construction, sharing no clip, and together form the 200 clips of
Multi-ViewBench. The richness distributions are almost separated: every complex
clip scores at least 64, a value that exactly one standard clip attains and none
exceeds, against a standard mean of 30.9. The gap is visible in the
underlying attributes rather than being an artifact of the index, since every
complex clip carries high non-rigid motion and a parallax rating of at least 4,
whereas the standard split is concentrated at parallax 2 to 3.

All 200 clips are natively $1920 \times 1080$, and are standardized to 5-second
clips at 15\,fps and 480p for evaluation as described in
Sec.~\ref{sec:datasets}.

\begin{table}[t]
  \centering
  \caption{\textbf{The two released splits of Multi-ViewBench.} Both hold 100
  clips, are disjoint, and are drawn from the 2,227-clip pool of
  Table~\ref{tab:eval_funnel}. Richness $\rho$ is defined in
  Eq.~\ref{eq:richness}. Attribute rows give the distribution over the 100 clips.}
  \label{tab:eval_splits}
  \small
  \begin{tabular}{lcc}
  \toprule
   & Standard & Complex \\
  \midrule
  Composition bucket            & \texttt{simple} & \texttt{complex} \\
  Candidates in pool            & 404 & 1,823 \\
  Selection                     & top 100 by $\rho$ & balance to 1,000, then top 100 by $\rho$ \\
  \midrule
  $\rho$ range                  & 24.0--64.0 & 64.0--100.0 \\
  $\rho$ mean                   & 30.9 & 71.2 \\
  $\rho$ median                 & 32.0 & 64.0 \\
  \midrule
  Categories represented        & 19 & 19 \\
  Largest category              & \texttt{Music} (24) & \texttt{Couple} (26) \\
  Clips with alias categories   & 20 & 47 \\
  \midrule
  Geometric parallax            & 2:\,57 \; 3:\,36 \; 4:\,7 & 4:\,83 \; 5:\,17 \\
  Occlusion potential           & 2:\,9 \; 3:\,63 \; 4:\,28 & 4:\,73 \; 5:\,27 \\
  Non-rigid motion              & low:\,1 \; mid:\,4 \; high:\,95 & high:\,100 \\
  Camera motion                 & \texttt{dynamic}:\,100 & \texttt{dynamic}:\,100 \\
  \bottomrule
  \end{tabular}
\end{table}

\subsubsection{Reproducibility and release}
\label{sec:eval_release}

We release the complete curation trail, not only the final clip list, so that the
benchmark can be audited and extended rather than merely consumed:

\begin{itemize}
  \item the Stage-1 and Stage-2 code, including the verbatim system prompt, the
        response schema, and the richness weights;
  \item the per-clip Stage-1 judgement JSONs, covering all gates, scores, and
        reasoning texts, for every clip scored, including those the pipeline
        rejected;
  \item the intermediate pool CSV and both released split CSVs, each carrying all
        18 columns, so a reader can recompute any count in
        Table~\ref{tab:eval_funnel} or reproduce either selection; and
  \item the exact clip identities and source paths, together with a SHA-256
        manifest over the frozen CSVs.
\end{itemize}

The frozen CSVs are the integrity anchor for every number reported in
Sec.~\ref{sec:results}: the splits were fixed before any baseline was evaluated
and were not revised afterwards. Stage~2 is deterministic and seed-free, so given
the released judgements the splits are recoverable exactly.

\subsection{Novel-Trajectory Synthesis} \label{sec:opensora_trajectories}

\definecolor{gridlight}{RGB}{207,211,216}
\definecolor{griddark}{RGB}{174,180,187}
\definecolor{elevateblue}{RGB}{24,164,224}
\definecolor{arcviolet}{RGB}{177,34,185}
\definecolor{zoomorange}{RGB}{255,132,52}
\definecolor{camerablue}{RGB}{43,111,218}

\begin{figure}[h]
\centering
\begin{minipage}[c]{0.55\textwidth}
\caption{\textbf{Schematic of the six synthesized evaluation trajectories}; Every trajectory is a camera-local SE(3) delta $\Delta_t$ composed \emph{after} the source camera's pose at each
frame (Eq.~\ref{eq:trajectory_delta}), so all six depart from the same source pose (\texttt{Start}) at $t=0$ and diverge
from it over the clip. Arc left/right (violet) sweep the camera
around a pivot ahead of it in the local horizontal (right--forward) plane; Dolly in/out (orange)
dolly along the local forward axis toward/away from that pivot; Pedestal-up-Tilt-down and Pedestal-down-Tilt-up (blue)
shift the camera along its local down axis while tilting about its local right axis so the same pivot stays centred in
frame.}
\label{fig:traj_schematic}
\end{minipage}%
\hspace{0.05\textwidth}%
\begin{minipage}[c]{0.35\textwidth}
\centering
\resizebox{\textwidth}{!}{%
\begin{tikzpicture}[
    x=1cm,
    y=1cm,
    line cap=round,
    line join=round,
    font=\small,
    >={Latex[length=3.2mm,width=2.4mm]}
]

\def\FrameWidth{10.6}
\def\FrameHeight{10.6}
\def\CornerRadius{0.85cm}

\coordinate (C) at (5.20,4.35);

\coordinate (BLT) at (1.90,8.70);
\coordinate (BRT) at (8.65,8.70);
\coordinate (BRB) at (8.65,1.85);
\coordinate (BLB) at (1.90,1.85);

\coordinate (FLT) at (0.35,10.10);
\coordinate (FRT) at (10.25,10.10);
\coordinate (FRB) at (10.25,0.45);
\coordinate (FLB) at (0.35,0.45);

\fill[black,opacity=0.14,rounded corners=\CornerRadius]
  (0.18,-0.18) rectangle ++(\FrameWidth,\FrameHeight);

\fill[white,rounded corners=\CornerRadius]
  (0,0) rectangle ++(\FrameWidth,\FrameHeight);

\begin{scope}
\clip[rounded corners=\CornerRadius]
  (0,0) rectangle ++(\FrameWidth,\FrameHeight);

\fill[white] (0,0) rectangle ++(\FrameWidth,\FrameHeight);

\draw[griddark,line width=0.8pt]
  (BLT) -- (BRT) -- (BRB) -- (BLB) -- cycle;

\draw[griddark,line width=0.8pt] (FLT) -- (BLT);
\draw[griddark,line width=0.8pt] (FRT) -- (BRT);
\draw[griddark,line width=0.8pt] (FRB) -- (BRB);
\draw[griddark,line width=0.8pt] (FLB) -- (BLB);

\foreach \x in {2.90,3.90,4.90,5.90,6.90,7.90}
  \draw[gridlight,line width=0.42pt] (\x,1.85) -- (\x,8.70);

\foreach \y in {2.85,3.85,4.85,5.85,6.85,7.85}
  \draw[gridlight,line width=0.42pt] (1.90,\y) -- (8.65,\y);

\foreach \xfront/\xback in {
    1.80/2.90,
    3.35/3.90,
    4.55/4.90,
    5.75/5.90,
    6.95/6.90,
    8.10/7.90,
    9.25/8.65
}
  \draw[gridlight,line width=0.42pt]
    (\xfront,10.10) -- (\xback,8.70);

\foreach \t in {0.18,0.36,0.54,0.72}
  \draw[gridlight,line width=0.42pt]
    ($(FLT)!\t!(BLT)$) -- ($(FRT)!\t!(BRT)$);

\foreach \xfront/\xback in {
    1.00/1.90,
    2.35/2.90,
    3.65/3.90,
    4.75/4.90,
    5.80/5.90,
    6.90/6.90,
    8.05/7.90,
    9.35/8.65
}
  \draw[gridlight,line width=0.42pt]
    (\xfront,0.45) -- (\xback,1.85);

\foreach \t in {0.18,0.36,0.54,0.72}
  \draw[gridlight,line width=0.42pt]
    ($(FLB)!\t!(BLB)$) -- ($(FRB)!\t!(BRB)$);

\foreach \t in {0.16,0.32,0.48,0.64,0.80}
  \draw[gridlight,line width=0.42pt]
    ($(FLT)!\t!(FLB)$) -- ($(BLT)!\t!(BLB)$);

\foreach \t in {0.25,0.50,0.75}
  \draw[gridlight,line width=0.42pt]
    ($(FLT)!\t!(BLT)$) -- ($(FLB)!\t!(BLB)$);

\foreach \t in {0.16,0.32,0.48,0.64,0.80}
  \draw[gridlight,line width=0.42pt]
    ($(FRT)!\t!(FRB)$) -- ($(BRT)!\t!(BRB)$);

\foreach \t in {0.25,0.50,0.75}
  \draw[gridlight,line width=0.42pt]
    ($(FRT)!\t!(BRT)$) -- ($(FRB)!\t!(BRB)$);

\draw[
    elevateblue,
    line width=2.35pt,
    -{Latex[length=4.1mm,width=3.1mm]}
]
($(C)+(0.00,0.35)$)
.. controls ($(C)+(-0.40,1.40)$) and ($(C)+(-0.30,2.75)$) ..
($(C)+(0.28,4.05)$);

\draw[
    elevateblue,
    line width=2.35pt,
    -{Latex[length=4.1mm,width=3.1mm]}
]
($(C)+(0.03,-0.35)$)
.. controls ($(C)+(0.30,-1.25)$) and ($(C)+(0.34,-2.10)$) ..
($(C)+(-0.12,-2.75)$);

\draw[
    arcviolet,
    line width=2.35pt,
    -{Latex[length=4.1mm,width=3.1mm]}
]
($(C)+(-0.34,0.00)$)
.. controls ($(C)+(-1.65,0.10)$) and ($(C)+(-2.45,0.75)$) ..
($(C)+(-3.05,1.48)$);

\draw[
    arcviolet,
    line width=2.35pt,
    -{Latex[length=4.1mm,width=3.1mm]}
]
($(C)+(0.34,-0.03)$)
.. controls ($(C)+(1.62,-0.22)$) and ($(C)+(2.55,0.05)$) ..
($(C)+(3.18,0.85)$);

\draw[
    zoomorange,
    line width=2.25pt,
    -{Latex[length=4.0mm,width=3.0mm]}
]
($(C)+(0.30,0.25)$) -- ($(C)+(2.13,2.15)$);

\draw[
    zoomorange,
    line width=2.25pt,
    -{Latex[length=4.0mm,width=3.0mm]}
]
($(C)+(-0.25,-0.28)$) -- ($(C)+(-2.20,-2.20)$);

\fill[white] (C) circle[radius=0.72cm];
\fill[camerablue] (C) circle[radius=0.66cm];
\draw[camerablue!75!black,line width=0.45pt]
  (C) circle[radius=0.68cm];

\draw[white,line width=1.65pt,rounded corners=0.07cm]
  ($(C)+(-0.35,-0.22)$) rectangle ($(C)+(0.35,0.22)$);

\draw[white,line width=1.65pt,rounded corners=0.035cm]
  ($(C)+(-0.18,0.22)$) --
  ($(C)+(-0.11,0.34)$) --
  ($(C)+(0.12,0.34)$) --
  ($(C)+(0.18,0.22)$);

\draw[white,line width=1.65pt]
  (C) circle[radius=0.18cm];

\fill[white]
  ($(C)+(-0.24,0.08)$) circle[radius=0.035cm];

\node[anchor=east,align=right,inner sep=1pt,font=\fontsize{16}{18}\selectfont]
  at ($(C)+(-0.47,2.55)$)
  {Pedestal $\uparrow$\\[-1pt]Tilt $\downarrow$};

\node[anchor=west,align=left,inner sep=1pt,font=\fontsize{16}{18}\selectfont]
  at ($(C)+(0.38,-1.52)$)
  {Pedestal $\downarrow$\\[-1pt]Tilt $\uparrow$};

\node[anchor=east,inner sep=1pt,font=\fontsize{16}{18}\selectfont]
  at ($(C)+(-3.35,0.43)$) {Arc left};

\node[anchor=west,inner sep=1pt,font=\fontsize{16}{18}\selectfont]
  at ($(C)+(3.06,-0.14)$) {Arc right};

\node[anchor=south west,inner sep=1pt,font=\fontsize{16}{18}\selectfont]
  at ($(C)+(1.47,1.84)$) {Dolly-out};

\node[anchor=north east,inner sep=1pt,font=\fontsize{16}{18}\selectfont]
  at ($(C)+(-1.42,-1.43)$) {Dolly-in};

\node[anchor=west,inner sep=1pt,font=\fontsize{16}{18}\selectfont]
  at ($(C)+(0.77,0.25)$) {Start};

\end{scope}

\draw[black,line width=1.25pt,rounded corners=\CornerRadius]
  (0,0) rectangle ++(\FrameWidth,\FrameHeight);

\end{tikzpicture}%
}
\end{minipage}
\end{figure}

Because each OpenSora scene provides a single source camera, there is no held-out ground-truth viewpoint against which to
evaluate reshoots on this data; we instead procedurally synthesize novel target trajectories $\boldsymbol{\Pi}^{\mathrm{tar}}_{1:T}$
that follow the source camera's own motion rather than being anchored to its initial pose.
Concretely, at every frame $t$ we
compose a time-varying SE(3) offset $\Delta_t$ \emph{after} the source pose, expressed in the source camera's own local axes
(OpenCV convention):
\begin{equation}
    \boldsymbol{\Pi}^{\mathrm{tar}}_t = \boldsymbol{\Pi}^{\mathrm{src}}_t \, \Delta_t .
    \label{eq:trajectory_delta}
\end{equation}
Since OpenSora sources are handheld or drone footage whose camera keeps moving throughout the clip, anchoring $\Delta_t$ to a
fixed world pose would let the novel and source views drift apart in content over the clip; composing it on the current
source pose instead keeps the novel view co-moving with the source, offset from it by a structured, time-varying amount.
Every $\Delta_t$ is the identity at $t=0$, so each synthesized trajectory has aligned initial pose with the source 
trajectory and diverges from it over the clip.
We synthesize six trajectory families this way, illustrated schematically in Fig.~\ref{fig:traj_schematic}:

\textbf{Arc~left/right} sweep $\Delta_t$'s rotation up to $60^{\circ}$ about the source camera's local down axis,
pivoting around a point a distance $d$ ahead of the camera.

\textbf{Dolly~in/out} shift $\Delta_t$ purely along the local forward axis, so the same pivot ends up three times 
closer or three times farther by the last frame.

\textbf{Pedestal-up-Tilt-down} and \textbf{Pedestal-down-Tilt-up} translate $\Delta_t$ along the local down axis by up to
30\% of the estimated distance to the surface being approached (e.g.\ the floor), while simultaneously tilting about the
local right axis so the same pivot stays centred in frame throughout. 

The pivot distance $d$ and the vertical hit-surface
distance used to size the pedestal trajectories are both estimated per scene from the reconstructed Gaussians $\mathcal{G}$
(Sec.~\ref{sec:prelim_dynamic_gs}). For arc/dolly, $d$ is the depth of the nearest surface in a small patch at the centre of
frame~0; for pedestal trajectories, the hit-surface distance is the mean depth over the bottom band of frame~0. Both fall back to the
median distance of the scene (based on $\{\bar{\boldsymbol{\mu}}_i\}$) from the source camera when a per-pixel depth estimate 
is unavailable for example a video from a landscape, so trajectory magnitudes remain sensible regardless of the arbitrary 
metric scale a feed-forward backbone assigns to a given scene.

\subsection{Local Decorrelation of the Whitening Projection} \label{sec:whitening_proof}

Sec.~\ref{sec:view_consistent_whitening} argues informally that the random-Fourier-feature (RFF) projection field
$\mathbf{Q}_p$ (Eq.~\plaineqref{eq:projection_qr}) suppresses the cross-pixel term $\mathbf{Q}_p^\top\mathbf{Q}_q$ in the projected
covariance (Eq.~\plaineqref{eq:projected_cross_covariance}) whenever two pixels' canonical anchors $\mathbf{X}_p,\mathbf{X}_q$ are
distinct. We make this precise with a short argument: two \emph{distinct} 3D anchors are mapped, through the many-harmonic
the fixed sinusoidal map of Eq.~\plaineqref{eq:projection_preconditioning} produces
essentially unrelated pre-projection matrices $\mathbf A_p,\mathbf A_q$; QR
factorizing two unrelated matrices yields two unrelated orthonormal bases $\mathbf Q_p,\mathbf Q_q$ of $\mathbb R^H$; and two
unrelated $C$-dimensional bases of a much larger $H$-dimensional space are nearly orthogonal to each other, i.e.\
$\mathbf Q_p^\top\mathbf Q_q\approx\mathbf 0$. This is what should suppress the correlation $\rho_{pq}$ in
Eq.~\plaineqref{eq:projected_cross_covariance}. Note that the target here is the
\emph{zero} matrix, not the identity: $\mathbf Q_p^\top\mathbf Q_p=\mathbf I_C$
holds \emph{exactly} by the reduced QR factorization in
Eq.~\plaineqref{eq:projection_qr}; it is only the $p\ne q$ case that needs an
argument. Together, an exact identity on
the diagonal and a near-zero off-diagonal term is what makes the noise field as a whole resemble an i.i.d.\ (identity-covariance)
Gaussian field, which is the property the pretrained diffusion model expects.

\begin{assumption}[Distinct anchors encode as generic bases] \label{ass:generic}
When $\mathbf X_p\ne\mathbf X_q$, the direct Fourier map of Eq.~\plaineqref{eq:projection_preconditioning} uses $H$
frequency vectors $\boldsymbol\omega_1,\dots,\boldsymbol\omega_H$ spanning many distinct, incommensurate harmonics (cf.\ random
Fourier features~\cite{rahimi2007random}), so that the resulting $\mathbf A_p,\mathbf A_q\in\mathbb
R^{H\times C}$ -- carry no shared structure beyond what is forced by $\mathbf X_p$ being close to $\mathbf X_q$. We formalize
``no shared structure'' by treating the orthonormal bases $\mathbf Q_p,\mathbf Q_q$ obtained by QR-factorizing $\mathbf A_p,\mathbf
A_q$ (Eq.~\plaineqref{eq:projection_qr}) as two independent, uniformly random $C$-dimensional subspaces of $\mathbb R^H$ (i.e.\
Haar-random on the Grassmannian) whenever $\mathbf X_p\ne\mathbf X_q$. This is the sense in which ``many harmonics'' matters:
it is what keeps two different points from being accidentally mapped to correlated bases.
\end{assumption}

\begin{lemma}[Random bases in high dimension are nearly orthogonal] \label{lem:random_subspace}
Let $\mathbf U,\mathbf V\in\mathbb R^{H\times C}$ have orthonormal columns, with $\mathbf V$ Haar-random on the Grassmannian and
independent of the fixed $\mathbf U$. Then
\begin{equation}
\mathbb E\big[\mathbf U^\top\mathbf V\big] = \mathbf 0, \qquad \mathbb E\big[\|\mathbf U^\top\mathbf V\|_F^2\big] =
\frac{C^2}{H},
\label{eq:random_subspace_overlap}
\end{equation}
so, by Jensen's inequality, $\mathbb E\big[\|\mathbf U^\top\mathbf V\|_F\big] \le C/\sqrt H$, or equivalently
$\mathbb E\big[\tfrac{1}{\sqrt C}\|\mathbf U^\top\mathbf V\|_F\big] \le \sqrt{C/H} \to 0$ as $H\to\infty$ for fixed $C$.
\end{lemma}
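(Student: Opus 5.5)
The plan is to reduce both moments to properties of a single Haar-distributed $H\times C$ orthonormal frame. Concretely, I would realize $\mathbf V=\mathbf O\mathbf V_0$, where $\mathbf O$ is Haar-distributed on the orthogonal group $O(H)$ and $\mathbf V_0$ is any fixed frame. This realization is legitimate: the uniform law on the Grassmannian (equivalently, on the Stiefel manifold of frames) is exactly the pushforward of Haar measure on $O(H)$. The quantity $\|\mathbf U^\top\mathbf V\|_F^2$ depends only on the subspace spanned by $\mathbf V$, because it is invariant under $\mathbf V\mapsto\mathbf V\mathbf R$ for $\mathbf R\in O(C)$. The mean statement, however, requires a specific frame. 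I would therefore take the frame to be Haar on the Stiefel manifold, since the paper's $\mathbf Q_q$ is itself a frame. Alternatively, one can note that the sign ambiguity in each column already forces a zero mean.

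For the first moment, I use $-\mathbf I_H\in O(H)$. Left-multiplying by it leaves Haar measure invariant, so $\mathbf V\overset{d}{=}-\mathbf V$. Hence $\mathbb E[\mathbf U^\top\mathbf V]=-\mathbb E[\mathbf U^\top\mathbf V]$, which gives $\mathbb E[\mathbf U^\top\mathbf V]=\mathbf 0$. No computation is needed for this step.

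For the second moment, I expand
$$\|\mathbf U^\top\mathbf V\|_F^2=\operatorname{tr}(\mathbf U^\top\mathbf V\mathbf V^\top\mathbf U)=\operatorname{tr}(\mathbf U\mathbf U^\top\,\mathbf P_V),$$
where $\mathbf P_V=\mathbf V\mathbf V^\top$ is the orthogonal projector onto the random subspace. The key step is to compute $\mathbb E[\mathbf P_V]$. By invariance, $\mathbb E[\mathbf P_V]$ commutes with every $\mathbf O\in O(H)$, since $\mathbf O\mathbf P_V\mathbf O^\top\overset{d}{=}\mathbf P_V$. A symmetric matrix commuting with all orthogonal matrices is a scalar multiple of $\mathbf I_H$; this follows from Schur's lemma, or elementarily by conjugating with reflections and permutations. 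The scalar is fixed by the trace, because $\operatorname{tr}\mathbf P_V=C$ almost surely. Therefore $\mathbb E[\mathbf P_V]=\tfrac{C}{H}\mathbf I_H$. Linearity of trace and expectation then gives
$$\mathbb E\|\mathbf U^\top\mathbf V\|_F^2=\tfrac{C}{H}\operatorname{tr}(\mathbf U\mathbf U^\top)=\tfrac{C}{H}\cdot C=\tfrac{C^2}{H}.$$
Here I use $\operatorname{tr}(\mathbf U\mathbf U^\top)=\operatorname{tr}(\mathbf U^\top\mathbf U)=C$, which is the one place where the orthonormality of $\mathbf U$'s columns enters.

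The concluding bounds are routine. Jensen applied to the concave square root gives $\mathbb E\|\cdot\|_F\le\sqrt{\mathbb E\|\cdot\|_F^2}=C/\sqrt H$. Dividing by $\sqrt C$ gives $\sqrt{C/H}\to0$ for fixed $C$.

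The main obstacle is the isotropy step $\mathbb E[\mathbf P_V]=\tfrac{C}{H}\mathbf I_H$. The delicate points are using the correct invariance (left action of $O(H)$ on frames) and justifying that a commuting symmetric matrix is scalar. I would handle the latter directly. Conjugating by the diagonal sign flip $\operatorname{diag}(\pm1)$ kills all off-diagonal entries. Conjugating by coordinate permutations equalizes the diagonal entries. These two facts together give a scalar matrix without invoking representation theory.
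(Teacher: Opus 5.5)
Your proof is correct. It uses the same mechanism as the paper's proof: rotational invariance forces an isotropic second moment, and a trace or norm constraint fixes its scale. The two arguments differ in bookkeeping.

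The paper works entry by entry. It treats each column $\mathbf v_j$ as marginally uniform on the sphere, so that $\mathbb E[\mathbf u_i^\top\mathbf v_j]=0$ and $\mathbb E[(\mathbf u_i^\top\mathbf v_j)^2]=1/H$, and then sums the $C^2$ entries. You instead write $\|\mathbf U^\top\mathbf V\|_F^2=\operatorname{tr}(\mathbf U\mathbf U^\top\mathbf P_V)$ and obtain $\mathbb E[\mathbf P_V]=\tfrac CH\mathbf I_H$ in one step. This is just the sum over $j$ of the paper's per-column fact $\mathbb E[\mathbf v_j\mathbf v_j^\top]=\tfrac1H\mathbf I_H$.

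Each version buys something:
\begin{itemize}
\item The paper's version gives slightly finer information: every entry of $\mathbf U^\top\mathbf V$ has mean $0$ and variance exactly $1/H$.
\item Your version is basis-free. It shows the second moment depends only on the two subspaces, so that half is genuinely a Grassmannian statement. It also extends immediately, for example to $C_1C_2/H$ for frames with $C_1$ and $C_2$ columns.
\item You make explicit a point the paper glosses over. A Haar law on the Grassmannian fixes only $\operatorname{span}(\mathbf V)$, so $\mathbb E[\mathbf U^\top\mathbf V]=\mathbf 0$ requires a frame-level law such as Haar on the Stiefel manifold. The paper assumes this tacitly when it calls each column uniform on the sphere.
\end{itemize}

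One side remark of yours is wrong as stated: the sign ambiguity of the columns does not by itself force a zero mean. A deterministic convention such as flipping $\mathbf v_1$ whenever $\mathbf u_1^\top\mathbf v_1<0$ leaves the span distribution unchanged but gives $\mathbb E[\mathbf u_1^\top\mathbf v_1]>0$. Only a sign-symmetric frame law, such as your Stiefel-Haar realization, yields the zero mean.
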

\begin{proof}
Let $\mathbf u_i,\mathbf v_j$ denote the unit-norm columns of $\mathbf U,\mathbf V$. Since $\mathbf v_j$ is a uniformly random
unit vector in $\mathbb R^H$ independent of the fixed $\mathbf u_i$, rotational symmetry gives $\mathbb E[\mathbf
u_i^\top\mathbf v_j]=0$ and, since $\sum_{h=1}^H \mathbb E[v_{j,h}^2]=1$ with every coordinate direction equivalent by symmetry,
$\mathbb E\big[(\mathbf u_i^\top\mathbf v_j)^2\big]=\tfrac1H$ for every unit vector $\mathbf u_i$. Summing over the $C^2$
entries of $\mathbf U^\top\mathbf V$ gives Eq.~\plaineqref{eq:random_subspace_overlap}. This is exact: each column $\mathbf v_j$
is marginally uniform on the unit sphere, so the orthogonality among the columns of $\mathbf V$ does not affect the
second moment. The bound on $\mathbb E\|\mathbf U^\top\mathbf V\|_F$ then follows from
$\mathbb E[X]\le\sqrt{\mathbb E[X^2]}$.
\end{proof}

\begin{proposition}[Local decorrelation of the whitening projection] \label{prop:decorrelation}
Under Assumption~\ref{ass:generic}, for two pixels $p\ne q$ with distinct anchors $\mathbf X_p\ne\mathbf X_q$,
\begin{equation}
\mathbb E\big[\mathbf Q_p^\top\mathbf Q_q\big] = \mathbf 0, \qquad
\frac{1}{\sqrt C}\,\mathbb E\big[\|\mathbf Q_p^\top\mathbf Q_q\|_F\big] \le \sqrt{\frac{C}{H}},
\label{eq:prop_result}
\end{equation}
while $\mathbf Q_p^\top\mathbf Q_p=\mathbf I_C$ exactly, for which the same normalized norm equals $1$. Consequently, in the
projected cross-covariance of Eq.~\plaineqref{eq:projected_cross_covariance},
\begin{equation}
\frac{1}{\sqrt C}\,\mathbb E\big[\|\operatorname{Cov}(\boldsymbol\epsilon_p^{\mathrm{3D}},\boldsymbol\epsilon_q^{\mathrm{3D}})\|_F\big] =
\frac{|\rho_{pq}|}{\sqrt C}\,\mathbb E\big[\|\mathbf Q_p^\top\mathbf Q_q\|_F\big] \le |\rho_{pq}|\sqrt{\frac{C}{H}}
\xrightarrow[H\to\infty]{} 0,
\label{eq:prop_bound}
\end{equation}
so the rasterization-induced correlation $\rho_{pq}$ of Eq.~\plaineqref{eq:normalized_cross_covariance} is suppressed for any two
pixels whose canonical anchors differ, and increasingly so as the noise dimension $H$ grows relative to $C$.
\end{proposition}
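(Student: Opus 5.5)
The plan is to reduce the proposition to Lemma~\ref{lem:random_subspace} by conditioning on $\mathbf Q_p$, and then substitute the result into the factorized covariance of Eq.~\plaineqref{eq:projected_cross_covariance}.

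\textbf{Step 1: the expectation and the first bound.} Fix pixels $p\ne q$ with $\mathbf X_p\ne\mathbf X_q$. By Assumption~\ref{ass:generic}, $\mathbf Q_p$ and $\mathbf Q_q$ are independent and each is Haar-distributed on the Grassmannian. So we may condition on $\mathbf Q_p$ and treat it as the fixed frame $\mathbf U$ of Lemma~\ref{lem:random_subspace}, with $\mathbf V=\mathbf Q_q$. The lemma then gives
\[
\mathbb E[\mathbf Q_p^\top\mathbf Q_q\mid\mathbf Q_p]=\mathbf 0, \qquad \mathbb E\big[\|\mathbf Q_p^\top\mathbf Q_q\|_F\mid\mathbf Q_p\big]\le C/\sqrt H .
\]
Both statements hold for every realization of $\mathbf Q_p$. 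Taking the outer expectation (tower property) and dividing by $\sqrt C$ yields Eq.~\plaineqref{eq:prop_result}.

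\textbf{Step 2: the diagonal case.} The identity $\mathbf Q_p^\top\mathbf Q_p=\mathbf I_C$ is exactly Eq.~\plaineqref{eq:projection_qr}. Since $\|\mathbf I_C\|_F=\sqrt C$, the normalized norm equals $1$.

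\textbf{Step 3: the covariance bound.} Recall that the covariance in Eq.~\plaineqref{eq:projected_cross_covariance} is taken over the primitive noises $\mathbf z_i$, conditional on the geometry and on the fixed frames. The scalar $\rho_{pq}$ from Eq.~\plaineqref{eq:normalized_cross_covariance} depends only on the weights $w_{p,i}$, not on the frames. Therefore
\[
\|\operatorname{Cov}(\boldsymbol\epsilon_p^{\mathrm{3D}},\boldsymbol\epsilon_q^{\mathrm{3D}})\|_F=|\rho_{pq}|\,\|\mathbf Q_p^\top\mathbf Q_q\|_F .
\]
Taking the expectation over the (assumed random) frames, $|\rho_{pq}|$ factors out as a constant. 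Applying the bound from Step 1 gives Eq.~\plaineqref{eq:prop_bound}. The limit $H\to\infty$ with $C$ fixed follows because $|\rho_{pq}|\le 1$ by Cauchy--Schwarz, since $\varepsilon>0$.

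\textbf{Main obstacle.} The only delicate point is to keep the two sources of randomness separate: the $\mathbf z_i$, over which $\operatorname{Cov}$ is computed, and the modeled randomness of the frames from Assumption~\ref{ass:generic}, over which $\mathbb E$ is taken. I would state explicitly that $\operatorname{Cov}$ is conditional on $\mathbf Q_p,\mathbf Q_q$. This also requires that the $\mathbf z_i$ are independent of the frames, which holds because the frames depend only on the anchors $\mathbf X_p$ and the fixed frequencies and phases. The rest of the argument is bookkeeping.
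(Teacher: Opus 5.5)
Your proposal is correct and follows the paper's proof, which consists of applying Lemma~\ref{lem:random_subspace} with $\mathbf U=\mathbf Q_p$, $\mathbf V=\mathbf Q_q$ under Assumption~\ref{ass:generic}, together with $\mathbf Q_p^\top\mathbf Q_p=\mathbf I_C$ from the reduced QR factorization and the factorization $\operatorname{Cov}(\boldsymbol\epsilon_p^{\mathrm{3D}},\boldsymbol\epsilon_q^{\mathrm{3D}})=\rho_{pq}\mathbf Q_p^\top\mathbf Q_q$. Your explicit conditioning on $\mathbf Q_p$ with the tower property, and your separation of the $\mathbf z_i$-randomness inside $\operatorname{Cov}$ from the modeled frame randomness inside $\mathbb E$, make precise what the paper's one-line argument leaves implicit.
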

\begin{proof}
Immediate from Lemma~\ref{lem:random_subspace} applied under Assumption~\ref{ass:generic} (with $\mathbf U=\mathbf Q_p,\mathbf
V=\mathbf Q_q$), together with $\mathbf Q_p^\top\mathbf Q_p=\mathbf I_C$ from the reduced QR factorization
(Eq.~\plaineqref{eq:projection_qr}), and the definition $\operatorname{Cov}(\boldsymbol\epsilon_p^{\mathrm{3D}},
\boldsymbol\epsilon_q^{\mathrm{3D}}) = \rho_{pq}\mathbf Q_p^\top\mathbf Q_q$ of Eq.~\plaineqref{eq:projected_cross_covariance}.
\end{proof}

\begin{remark}
This argument explains \emph{why} distinctness of the anchor helps, but not \emph{how far apart} two anchors must be before
Assumption~\ref{ass:generic} is a good approximation: two points much closer together than the shortest wavelength present in
the fixed frequency vectors $\boldsymbol\omega_a$ still produce almost the same $\mathbf A_p\approx\mathbf A_q$ and
$\mathbf Q_p\approx\mathbf Q_q$, so $\mathbf Q_p^\top\mathbf Q_q\approx\mathbf I_C$ instead of $\mathbf 0$ -- this is exactly why
Sec.~\ref{sec:view_consistent_whitening} also needs \emph{high-frequency} harmonics, not merely \emph{many} of them: frequency
sets the length scale below which Assumption~\ref{ass:generic} stops holding and decorrelation degrades toward the pre-projection
correlation $\rho_{pq}$. This is also why the occlusion-boundary case flagged in Sec.~\ref{sec:view_consistent_whitening} is a
genuine exception: the blended anchor of Eq.~\plaineqref{eq:canonical_position} can there be arbitrarily close to both surfaces'
anchors, defeating the distinctness the argument relies on.
\end{remark}

\paragraph{Scope.} This proposition addresses only the decorrelation induced by the
canonical-position-dependent projection field. It does not establish exact whitening
of the complete rasterized noise field, since shared primitives, occlusion-boundary
blending, and visibility changes can preserve residual correlations. It also does not
model correspondence errors in unobserved regions caused by the finite background
proxy. Finally, Assumption~\ref{ass:generic} is an idealization: it treats the
projection bases induced by distinct canonical positions as independent Haar-random
subspaces, whereas in the actual method they are deterministic functions of the fixed
frequencies $\boldsymbol{\omega}_a$, phases $\phi_{a,c}$, and scale $s$. The quality of
this approximation therefore depends on the separation of canonical positions
relative to the frequencies used by the projection field. Empirically, we found that
H=128 works fine and we keep it constant through all the experiments.

\subsection{Noise Rasterization and View Consistency} \label{sec:rotation_equivariance}

Sec.~\ref{sec:canonical_3d_noise} claims that two cameras observing the same 3D content receive the same rasterized noise
because both rasterize the same canonical anchor. We prove this exactly, for \emph{arbitrary} pairs of camera poses -- not
only the pure-rotation special case -- and for the general 4D (dynamic) setting, where a deformation module has already
produced the time-dependent Gaussian parameters. In 4D Gaussian Splatting this result applies independently at each
timestamp, after the deformation module has produced the time-dependent parameters of the 3D Gaussian primitives; we
therefore fix a single timestamp $t$ throughout and do not address temporal consistency here (that is a separate claim,
discussed qualitatively in Sec.~\ref{sec:view_consistent_whitening}).

\paragraph{Setup at a fixed time.} Consider two time-dependent Gaussian centers
\begin{equation}
\mathbf p(t),\mathbf q(t)\in\mathbb R^3.
\end{equation}
These centers may be generated by a 4D deformation field $D$ from their canonical positions $\mathbf p_0,\mathbf q_0$ --
i.e.\ $\mathbf p_0,\mathbf q_0$ play the role of the canonical anchors $\bar{\boldsymbol\mu}_i$ of
Eq.~\plaineqref{eq:primitive_attributes}:
\begin{equation}
\mathbf p(t) = D(\mathbf p_0,t), \qquad \mathbf q(t) = D(\mathbf q_0,t).
\label{eq:deformation_field}
\end{equation}
Let the camera-to-world pose at timestamp $t$ be
\begin{equation}
T_{wc}(t) = \begin{bmatrix} R_{wc}(t) & \mathbf c(t) \\ \mathbf 0^\top & 1 \end{bmatrix} \in SE(3),
\end{equation}
where $R_{wc}(t)\in SO(3)$ describes the camera orientation and $\mathbf c(t)\in\mathbb R^3$ denotes the camera center in
world coordinates. The corresponding world-to-camera transformation is
\begin{equation}
T_{cw}(t) = T_{wc}(t)^{-1} = \begin{bmatrix} R_{cw}(t) & \mathbf t_{cw}(t) \\ \mathbf 0^\top & 1 \end{bmatrix}, \qquad
R_{cw}(t)=R_{wc}(t)^\top, \quad \mathbf t_{cw}(t) = -R_{cw}(t)\mathbf c(t).
\end{equation}
The camera-space coordinate of $\mathbf p(t)$ is therefore
\begin{equation}
\mathbf x_p^c(t) = R_{cw}(t)\mathbf p(t) + \mathbf t_{cw}(t) = R_{cw}(t)\big(\mathbf p(t)-\mathbf c(t)\big),
\end{equation}
and, similarly, $\mathbf x_q^c(t) = R_{cw}(t)\big(\mathbf q(t)-\mathbf c(t)\big)$. Let $\mathbf K$ denote the camera intrinsic
matrix and $\pi_K$ the perspective projection; for a camera-space point $\mathbf x=[x,y,z]^\top$,
\begin{equation}
\pi_K(\mathbf x) = \begin{bmatrix} f_x\frac{x}{z}+c_x \\[4pt] f_y\frac{y}{z}+c_y \end{bmatrix}.
\label{eq:perspective_projection}
\end{equation}
The projected pixel locations of $\mathbf p(t)$ and $\mathbf q(t)$ are thus
\begin{equation}
\mathbf u_p(t) = \pi_K\big(R_{cw}(t)(\mathbf p(t)-\mathbf c(t))\big), \qquad
\mathbf u_q(t) = \pi_K\big(R_{cw}(t)(\mathbf q(t)-\mathbf c(t))\big).
\label{eq:projected_pixels}
\end{equation}
Both are governed by the same world-to-camera transformation $T_{cw}(t)$, although their pixel locations generally differ,
since perspective projection is depth- and position-dependent.

\paragraph{Two cameras observing the same primitive.} Let $\Pi_1=T_{wc}^{(1)}(t)$ and $\Pi_2=T_{wc}^{(2)}(t)$ be \emph{any}
two camera poses at the same timestamp $t$ -- not necessarily related by a rotation, or by any other closed-form
relationship. By Eq.~\plaineqref{eq:projected_pixels}, the same primitive's center $\mathbf p(t)$ projects to generally different
pixels $\mathbf u_p^{(1)}(t)$ under $\Pi_1$ and $\mathbf u_p^{(2)}(t)$ under $\Pi_2$. What is invariant is not the pixel
location but what gets attached to it: the primitive's canonical anchor $\mathbf p_0$ and its identity noise vector
$\mathbf z_i$ of Eq.~\plaineqref{eq:primitive_noise} are attributes of the primitive itself, fixed once at construction time and
never a function of $t$, of $\Pi_1$, or of $\Pi_2$ (Sec.~\ref{sec:canonical_3d_noise}).

\begin{assumption}[Well-resolved primitive] \label{ass:well_resolved}
At pixel $p$ under $\Pi_1$, primitive $i$ dominates the rasterization weight: $w_{p,i}(\Pi_1)>0$ and $w_{p,j}(\Pi_1)\approx0$
for all $j\ne i$ (and symmetrically for pixel $p'$ under $\Pi_2$). This is the idealization underlying the statement in
Sec.~\ref{sec:view_consistent_whitening} that correspondence is exact when the same primitives dominate the rasterization;
it fails at genuine multi-primitive blending, most notably at occlusion boundaries.
\end{assumption}

\begin{proposition}[View consistency of the rasterized noise] \label{prop:view_consistency}
Let $p$ be the pixel under $\Pi_1$ and $p'$ the pixel under $\Pi_2$ at which primitive $i$ is rasterized (i.e.\
$p=\mathbf u_p^{(1)}(t)$, $p'=\mathbf u_p^{(2)}(t)$ from Eq.~\plaineqref{eq:projected_pixels}). Under
Assumption~\ref{ass:well_resolved},
\begin{equation}
\mathbf X_p(\Pi_1) = \mathbf X_{p'}(\Pi_2) = \mathbf p_0, \qquad
\mathbf Q_p(\Pi_1) = \mathbf Q_{p'}(\Pi_2), \qquad
\boldsymbol\epsilon_p^{\mathrm{3D}}(\Pi_1) = \boldsymbol\epsilon_{p'}^{\mathrm{3D}}(\Pi_2),
\label{eq:view_consistency_result}
\end{equation}
for \emph{any} two camera poses $\Pi_1,\Pi_2$ at the same timestamp $t$ -- no rotation, translation, or other structural
relationship between $\Pi_1$ and $\Pi_2$ is required.
\end{proposition}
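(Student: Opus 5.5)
The plan is to establish the three equalities of Eq.~\plaineqref{eq:view_consistency_result} in order, each following from the previous one. Each quantity is a deterministic function of three ingredients: the rasterization weights $w_{p,\cdot}$, the primitive attributes $(\bar{\boldsymbol\mu}_j,\mathbf z_j)$, and the fixed Fourier parameters $(\boldsymbol\omega_a,\phi_{a,c},s)$. Of these, only the weights depend on the camera pose. So the whole argument reduces to showing that, under Assumption~\ref{ass:well_resolved}, the dependence on the weights collapses to something pose-independent.

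\emph{Step 1 (canonical coordinate).} Substitute the well-resolved weights into Eq.~\plaineqref{eq:canonical_position}. Under $\Pi_1$ we have $w_{p,j}(\Pi_1)=0$ for $j\neq i$, which gives
\[
\mathbf X_p(\Pi_1)=\frac{w_{p,i}(\Pi_1)\,\mathbf p_0}{w_{p,i}(\Pi_1)+\varepsilon}.
\]
The same computation under $\Pi_2$ gives the analogous expression with $w_{p',i}(\Pi_2)$. The pose enters only through the scalar weight $w_{p,i}$, which is a different number in the two views: it depends on depth, projected covariance and accumulated transmittance. I would therefore interpret the statement in the idealized sense that the paper itself adopts, where the normalizer is exact ($\varepsilon\to0$, or equivalently the blend is normalized by $\sum_j w_{p,j}$). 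In that regime the ratio is exactly $1$, so $\mathbf X_p(\Pi_1)=\mathbf X_{p'}(\Pi_2)=\mathbf p_0=\bar{\boldsymbol\mu}_i$. The essential observation is that the anchor $\bar{\boldsymbol\mu}_i=\mathbf p_0$ is used rather than the deformed center $\mathbf p(t)=D(\mathbf p_0,t)$, and rather than any camera-space quantity $R_{cw}(\mathbf p(t)-\mathbf c)$. Consequently neither $\Pi_1,\Pi_2$ nor the projection Eq.~\plaineqref{eq:projected_pixels} appears in the result.

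\emph{Step 2 (projection basis).} $\mathbf A_p$ in Eq.~\plaineqref{eq:projection_preconditioning} is a function of $\mathbf X_p$ alone, with fixed view- and time-independent parameters. Step~1 therefore gives $\mathbf A_p(\Pi_1)=\mathbf A_{p'}(\Pi_2)$. The reduced QR factorization in Eq.~\plaineqref{eq:projection_qr} is then applied to identical matrices. To conclude $\mathbf Q_p=\mathbf Q_{p'}$ I need QR to be a well-defined map, not merely defined up to column signs. I would fix the standard convention that $\mathbf R$ has a positive diagonal; this is unique when $\mathbf A_p$ has full column rank, which is generic for $H\gg C$. Without full rank, the deterministic algorithm run on identical inputs still returns identical outputs.

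\emph{Step 3 (noise).} Substitute into Eq.~\plaineqref{eq:final_structured_noise}. The numerator becomes $\mathbf Q_p^\top w_{p,i}\mathbf z_i$ and the denominator $\sqrt{w_{p,i}^2+\varepsilon}$. In the same idealized limit the ratio $w_{p,i}/\sqrt{w_{p,i}^2}=1$ (using $w_{p,i}>0$), so $\boldsymbol\epsilon^{\mathrm{3D}}_p(\Pi_1)=\mathbf Q_p^\top\mathbf z_i$. The identical computation under $\Pi_2$ gives $\mathbf Q_{p'}^\top\mathbf z_i$, and Step~2 shows these are equal. Here the key fact is that $\mathbf z_i$ from Eq.~\plaineqref{eq:primitive_noise} is sampled once and is independent of $t$ and of the pose.

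\emph{Main obstacle.} The hardest part is being honest about the scalar weight cancellation in Steps~1 and~3. With $\varepsilon>0$ and only approximate dominance ($w_{p,j}\approx0$), the equalities hold only up to error terms of order $\varepsilon/w_{p,i}$ and $\sum_{j\neq i}w_{p,j}/w_{p,i}$. I would state the proof exactly in the limit of Assumption~\ref{ass:well_resolved} with $\varepsilon\to0$. I would then add a remark that continuity of $\mathbf X\mapsto\mathbf A\mapsto\mathbf Q$ (QR is continuous on full-rank matrices) yields approximate equality when dominance is only approximate. Notably, none of the steps uses any relation between $\Pi_1$ and $\Pi_2$, which is why the result holds for arbitrary poses.
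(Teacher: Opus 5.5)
Your proposal is correct and follows the paper's proof essentially step for step. Under Assumption~\ref{ass:well_resolved} you collapse $\mathbf X_p$ and $\widetilde{\mathbf Y}_p$ to the pose-independent $\mathbf p_0$ and $\mathbf z_i$, use that $\mathbf Q$ is a deterministic function of the canonical position alone, and conclude $\boldsymbol\epsilon^{\mathrm{3D}}=\mathbf Q(\mathbf p_0)^\top\mathbf z_i$ in both views; your explicit $\varepsilon\to0$ limit (the paper's $w_{p,i}\gg\varepsilon$), the QR sign convention, and the continuity remark for approximate dominance are careful refinements of points the paper passes over quickly.
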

\begin{proof}
Under Assumption~\ref{ass:well_resolved}, the canonical-position rasterization of Eq.~\plaineqref{eq:canonical_position}
collapses to the single dominant term: $\mathbf X_p(\Pi_1) = \frac{w_{p,i}(\Pi_1)\mathbf p_0}{w_{p,i}(\Pi_1)+\varepsilon}
\to\mathbf p_0$ as $w_{p,i}(\Pi_1)\gg\varepsilon$, and identically $\mathbf X_{p'}(\Pi_2)\to\mathbf p_0$, giving the first
equality in Eq.~\plaineqref{eq:view_consistency_result}; this uses only that $\mathbf p_0$ does not depend on $\Pi_1$ or $\Pi_2$.
Since $\mathbf Q_p$ is a fixed deterministic function of the canonical position alone (Eq.~\plaineqref{eq:projection_qr}), equal
anchors give $\mathbf Q_p(\Pi_1)=\mathbf Q(\mathbf p_0)=\mathbf Q_{p'}(\Pi_2)$. Similarly, the normalized feature of
Eq.~\plaineqref{eq:normalized_noise_feature} collapses to $\widetilde{\mathbf Y}_p(\Pi_1) = \frac{w_{p,i}(\Pi_1)\mathbf
z_i}{\sqrt{w_{p,i}(\Pi_1)^2+\varepsilon}}\to\mathbf z_i$ (using $w_{p,i}(\Pi_1)>0$), independent of the specific value of
$w_{p,i}(\Pi_1)$, and identically $\widetilde{\mathbf Y}_{p'}(\Pi_2)\to\mathbf z_i$; this uses only that $\mathbf z_i$ does not
depend on $\Pi_1$ or $\Pi_2$ either. Combining the equal $\mathbf Q$ and $\widetilde{\mathbf Y}$ through
Eq.~\plaineqref{eq:final_structured_noise} gives $\boldsymbol\epsilon_p^{\mathrm{3D}}(\Pi_1) = \mathbf Q(\mathbf p_0)^\top\mathbf
z_i = \boldsymbol\epsilon_{p'}^{\mathrm{3D}}(\Pi_2)$, the last equality of Eq.~\plaineqref{eq:view_consistency_result}. At no point
did the proof use a relationship between $\Pi_1$ and $\Pi_2$, a closed-form pixel correspondence, or the specific values of
$\mathbf u_p^{(1)}(t),\mathbf u_p^{(2)}(t)$ from Eq.~\plaineqref{eq:projected_pixels} beyond their existence.
\end{proof}

\begin{corollary}[Pure rotation gives the correspondence in closed form] \label{cor:homography}
If $\Pi_2$ is obtained from $\Pi_1$ by a pure rotation $\mathbf R\in SO(3)$ about its own optical center (same $\mathbf c(t)$,
$R_{cw}^{(2)}(t)=\mathbf R R_{cw}^{(1)}(t)$, i.e.\ $R_{wc}^{(2)}(t)=R_{wc}^{(1)}(t)\mathbf R^\top$, with $\mathbf R$ expressed in
the first camera's frame), the pixel $p'$ at which Prop.~\ref{prop:view_consistency} applies is additionally
known in closed form, independent of $\mathbf p_0$'s depth: writing homogeneous pixel coordinates $\widetilde{\mathbf u}$,
\begin{equation}
\widetilde{\mathbf u}' \propto \big(\mathbf K\mathbf R\mathbf K^{-1}\big)\widetilde{\mathbf u}.
\label{eq:homography_corollary}
\end{equation}
\end{corollary}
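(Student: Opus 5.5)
The plan is to compute the projected pixel of the same canonical primitive under both cameras directly from Eq.~\plaineqref{eq:projected_pixels}, and show that the depth-dependent part cancels. Fix the timestamp $t$ and drop it from the notation. Write $\mathbf x^{(1)} = R^{(1)}_{cw}(\mathbf p-\mathbf c)$ for the camera-space point in the first camera. Because the two cameras share the optical centre $\mathbf c$ and satisfy $R^{(2)}_{cw}=\mathbf R R^{(1)}_{cw}$, the camera-space point in the second camera is
\begin{equation*}
\mathbf x^{(2)} = R^{(2)}_{cw}(\mathbf p-\mathbf c) = \mathbf R\,\mathbf x^{(1)}.
\end{equation*}
The key observation is that the translation term is shared: $\mathbf t_{cw}$ contributes nothing beyond $-R_{cw}\mathbf c$. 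Hence the two camera-space points differ only by a linear map, with no additive offset. Any additive offset would otherwise survive the projective division and bring in depth dependence.

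Next, rewrite the perspective projection of Eq.~\plaineqref{eq:perspective_projection} in homogeneous form, $\widetilde{\mathbf u} \propto \mathbf K\mathbf x$, where the proportionality constant is $1/z$. This gives $\widetilde{\mathbf u}^{(1)} \propto \mathbf K \mathbf x^{(1)}$, so $\mathbf x^{(1)} \propto \mathbf K^{-1}\widetilde{\mathbf u}^{(1)}$, with the unknown scale equal to the depth $z^{(1)}$. Substituting into the second camera gives
\begin{equation*}
\widetilde{\mathbf u}' \propto \mathbf K \mathbf x^{(2)} = \mathbf K\mathbf R\,\mathbf x^{(1)} \propto \mathbf K\mathbf R\mathbf K^{-1}\widetilde{\mathbf u}.
\end{equation*}
The depth appears only as an overall scalar, and projective equality absorbs it. This proves Eq.~\plaineqref{eq:homography_corollary}, and shows that the result is independent of $\mathbf p_0$'s depth and of the deformation $D$. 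The deformation only determines $\mathbf p=D(\mathbf p_0,t)$, which the argument never inspects.

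Finally, I identify $p=\widetilde{\mathbf u}$ and $p'=\widetilde{\mathbf u}'$ as exactly the pair of pixels in Prop.~\ref{prop:view_consistency}, namely $p=\mathbf u_p^{(1)}(t)$ and $p'=\mathbf u_p^{(2)}(t)$. Under Assumption~\ref{ass:well_resolved}, the conclusions $\mathbf X_p=\mathbf X_{p'}$, $\mathbf Q_p=\mathbf Q_{p'}$ and $\boldsymbol\epsilon^{\mathrm{3D}}_p=\boldsymbol\epsilon^{\mathrm{3D}}_{p'}$ then carry over unchanged.

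The only delicate point is bookkeeping. I need to confirm that the convention $R^{(2)}_{cw}=\mathbf R R^{(1)}_{cw}$ (equivalently $R^{(2)}_{wc}=R^{(1)}_{wc}\mathbf R^\top$) yields $\mathbf K\mathbf R\mathbf K^{-1}$ rather than $\mathbf K\mathbf R^\top\mathbf K^{-1}$. The line-by-line substitution above settles this. I also need positive depth in both views, $z^{(1)},z^{(2)}>0$, so that the point is actually rasterized. That is implicit in the hypothesis of Prop.~\ref{prop:view_consistency} that primitive $i$ is visible at both $p$ and $p'$.
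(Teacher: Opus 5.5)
Your proposal is correct and follows the paper's own proof essentially line for line. Both use the shared optical centre to get $\mathbf x^{(2)}=\mathbf R\,\mathbf x^{(1)}$, write $\mathbf x^{(1)}=z\,\mathbf K^{-1}\widetilde{\mathbf u}$, and let projective equivalence absorb the depth $z$, which yields $\widetilde{\mathbf u}'\propto\mathbf K\mathbf R\mathbf K^{-1}\widetilde{\mathbf u}$.
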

\begin{proof}
Writing $\mathbf x^c=R_{cw}^{(1)}(t)(\mathbf p(t)-\mathbf c(t))$ and $z=[\mathbf x^c]_3>0$, so $\mathbf x^c=z\mathbf
K^{-1}\widetilde{\mathbf u}$, the rotated camera's coordinate is $\mathbf x^{c\prime}=R_{cw}^{(2)}(t)(\mathbf p(t)-\mathbf c(t))=\mathbf R\mathbf x^c=z\mathbf R\mathbf
K^{-1}\widetilde{\mathbf u}$, so $\widetilde{\mathbf u}'\propto\mathbf K\mathbf x^{c\prime}=z\,\mathbf K\mathbf R\mathbf
K^{-1}\widetilde{\mathbf u}\propto\mathbf K\mathbf R\mathbf K^{-1}\widetilde{\mathbf u}$, the depth $z$ being absorbed by the
projective equivalence. Unlike Prop.~\ref{prop:view_consistency}, which needs no relationship between $\Pi_1,\Pi_2$ because it
never has to locate $p'$ explicitly, this closed form exists only because a pure rotation about the optical center introduces
no parallax (Hartley \& Zisserman~\cite{hartley2003multiple}); a general pair $\Pi_1,\Pi_2$ (translation included) has no such
depth-independent map, which is exactly why Prop.~\ref{prop:view_consistency} is stated without one.
\end{proof}

\begin{remark}
Two points on scope. \emph{(i) Where Assumption~\ref{ass:well_resolved} fails:} at genuine multi-primitive blending -- most notably a pixel
straddling an occlusion boundary -- $\mathbf X_p$ averages the anchors of unrelated surfaces (Eq.~\plaineqref{eq:canonical_position}),
so the clean collapse to $\mathbf p_0$ in the proof no longer holds exactly; this is the same exception already noted in
Sec.~\ref{sec:view_consistent_whitening} and in Prop.~\ref{prop:decorrelation}'s remark. \emph{(ii) Locus of the guarantee:}
as with Props.~\ref{prop:decorrelation}, this is a statement about the \emph{noise construction}, exact under
Assumption~\ref{ass:well_resolved}; it does not by itself imply the trained diffusion model's generated frames inherit this
consistency, which is what the view-synchronization metrics of Sec.~\ref{sec:metrics} test empirically.
\end{remark}

\subsection{Experimental Details} \label{sec:supp_experimental_details}
\textbf{Training.} We evaluate all baseline methods using their officially released
checkpoints. We train \name with the Wan2.2-14B backbone on eight H100 GPUs,
using a per-GPU batch size of one and gradient accumulation over four steps,
for an effective batch size of 32. The model is trained for 10K iterations with
a learning rate of $1 \times 10^{-4}$.

\textbf{4D representation.} We use an off-the-shelf feed-forward Gaussian Splatting
model, NoPo4D~\cite{balice2026nopo4d}, which avoids per-video reconstruction optimization and produces dynamic 
primitives that are active over finite temporal windows. Each primitive has a time-varying center,
covariance, opacity, and appearance, together with a time-invariant canonical
anchor. To render noise for views not covered by the source video, we model
unobserved background regions with an enclosing cubemap: a finite,
axis-aligned box centered at the centroid of the canonical anchors, with a
half-extent set to a fixed multiple of the reconstructed scene extent. The
cubemap therefore lies at a depth comparable to the scene rather than at
infinity, as in an environment map. We index background texels by the
ray--box intersection point instead of ray direction alone, allowing pixels that
observe the same proxy surface to share a canonical location. The cubemap has
six square faces, each containing $S\times S$ texels with $d$ channels; we use
$S=1024$ and $d=3$.

\textbf{Inference.} Unless stated otherwise, we use 50 denoising steps for all
\name experiments, matching the default inference configuration of the WAN
video models. We use the same fixed template as the text prompt during training
and inference. Table~\ref{tab:compute_time_breakdown} reports the runtime of
the main \name components. Constructing the 4D representation takes 4.18 seconds
and is inexpensive relative to total generation time; because it is performed
once per source video, this cost can also be amortized across multiple reshoots.
The reported noise-rasterization time was measured on a particularly large
dynamic scene containing approximately 6.53M Gaussian points. Video generation
uses the original WAN configuration. More efficient step-distillation methods
could reduce this cost, but evaluating them is beyond the scope of this work.

\textbf{Metrics.} For pixel matching (Pix.~Mat.), we use RoMA-v2~\cite{edstedt2025romav2}
to estimate point correspondences and count matches with uncertainty below
$0.5$. We report the average number of matched pixels separately for dynamic
and static regions. To identify dynamic foreground objects, we obtain a
segmentation mask from the source video using SAM3~\cite{carion2026sam3}.
We use this mask to partition correspondences for each source--target frame
pair, enabling separate evaluation of dynamic and static regions.

\begin{table}[t]
\centering
\caption{\textbf{Compute time breakdown} for the four main components of \name. Times are reported in seconds per input clip with 75 frames at 480$\times$832 resolution on an Nvidia H100 GPU. The DiT time is reported per denoising step.}
\label{tab:compute_time_breakdown}
\resizebox{0.9\textwidth}{!}{%
\begin{tabular}{lcccccc}
\toprule
 & \shortstack{4D representation\\(NoPo4D)} & \shortstack{Noise\\rasterization} & Whitening & \multicolumn{3}{c}{\shortstack{Video generation\\(WAN2.2-14B)}} \\
\midrule
 & & & & DiT & VAE decoder & Total \\
\cline{5-7} \\
Compute time (sec.) & 4.18 & 6.44 & 0.53 & 23.97 / step & 4.60 & 1203.1 \\
\bottomrule
\end{tabular}%
}
\end{table}

\subsection{Qualitative Comparison} \label{sec:supp_qualitative_comparison}

For a more detailed qualitative comparison, we encourage readers to consult the HTML viewer
provided in the supplemental materials. In the print version, Fig.~\ref{fig:supp_qual_comparison}
shows additional results from test scenes under a Dolly-out camera trajectory, while
Figs.~\ref{fig:supp_qual_comparison_arc_left} and~\ref{fig:supp_qual_comparison_arc_right}
show results under Arc-left and Arc-right trajectories, respectively. Each example
contains six equally spaced frames from the source clip and from each method's generated reshoot.

\begin{figure}[t]
\centering
\includegraphics[width=\textwidth]{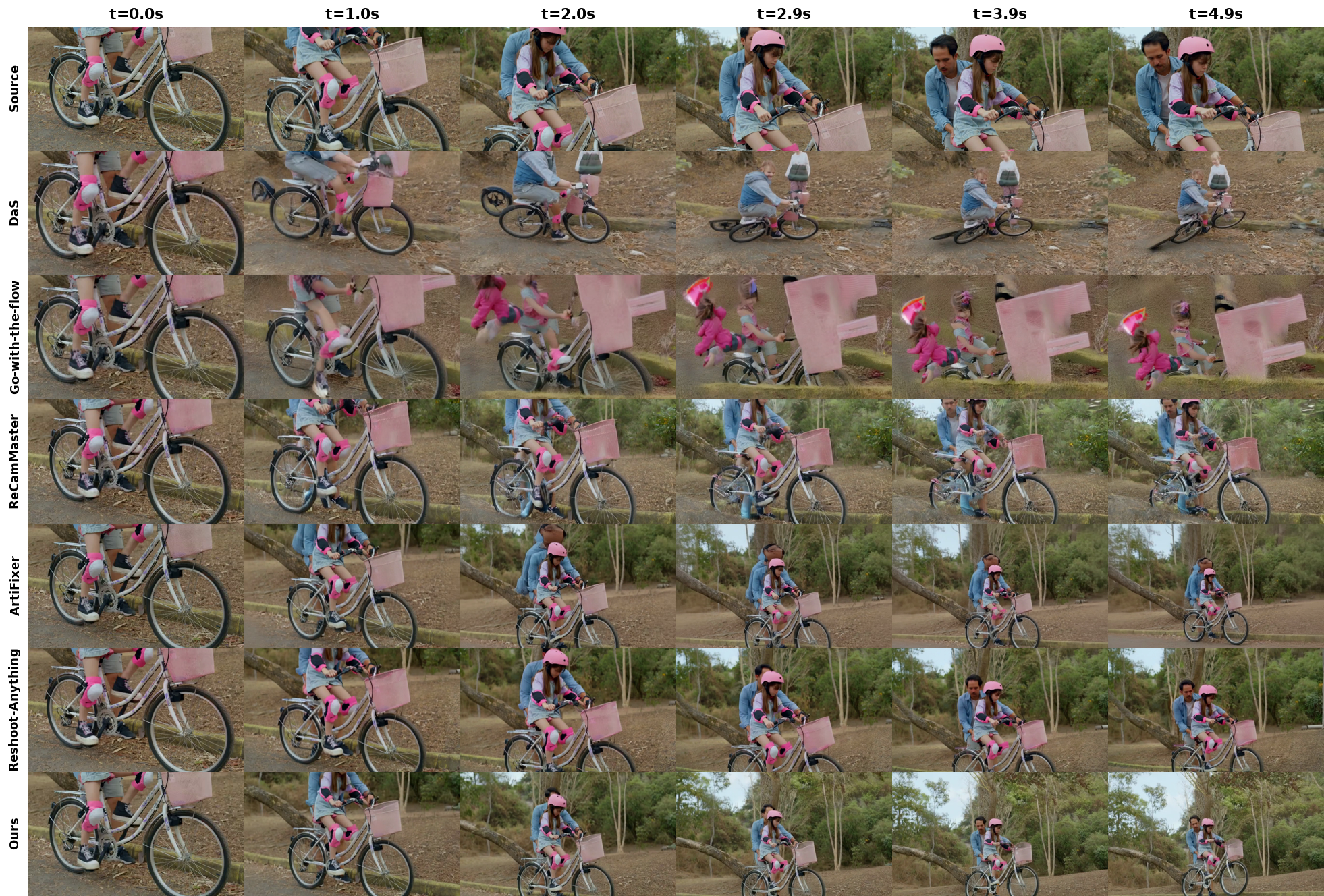}
\vspace{4pt}
\includegraphics[width=\textwidth]{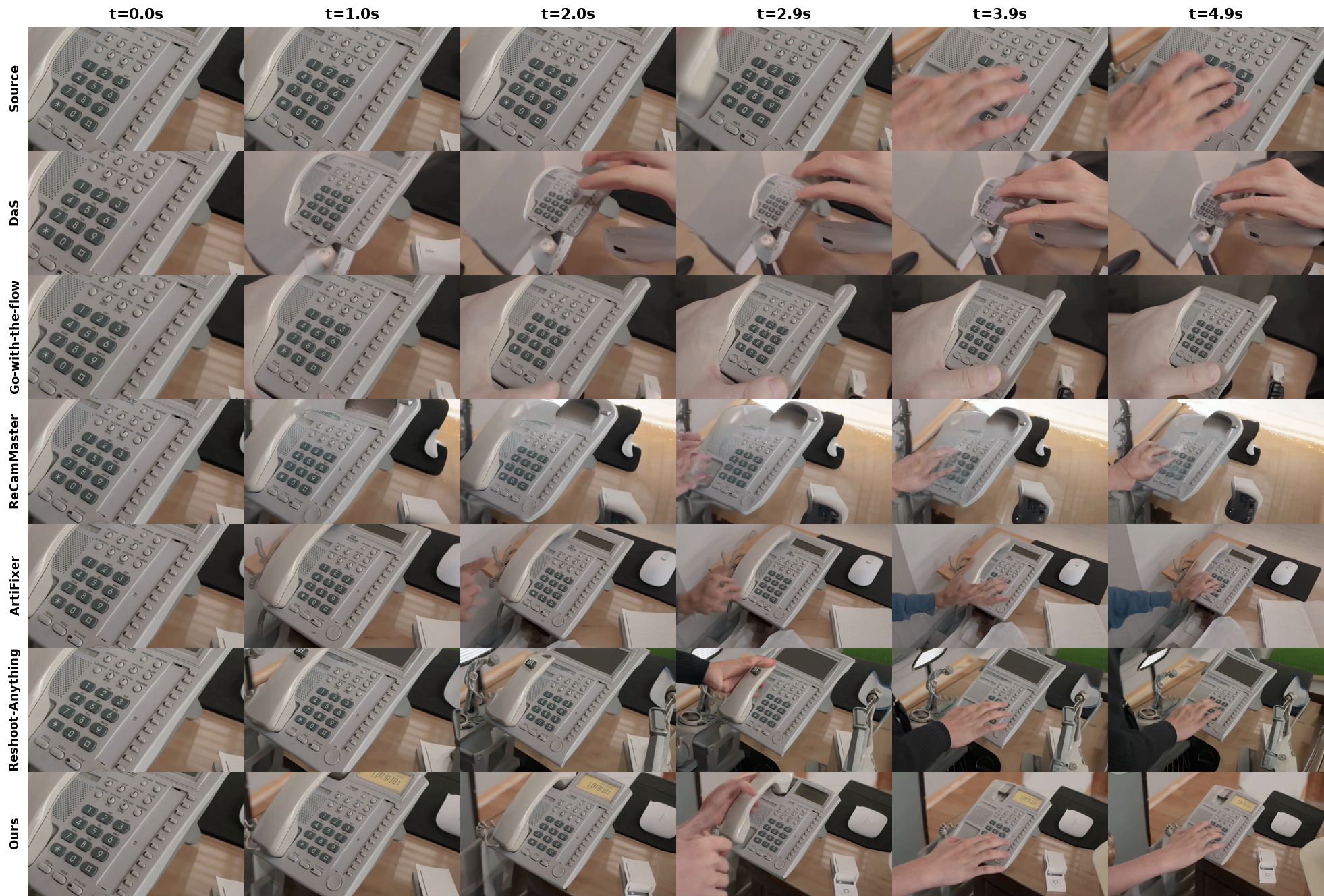}
\caption{\textbf{Qualitative comparisons} under a Dolly-out camera trajectory. For each example,
six equally spaced frames are shown for the source video and each method's generated reshoot.}
\label{fig:supp_qual_comparison}
\end{figure}

\begin{figure}[t]
\centering
\includegraphics[width=\textwidth]{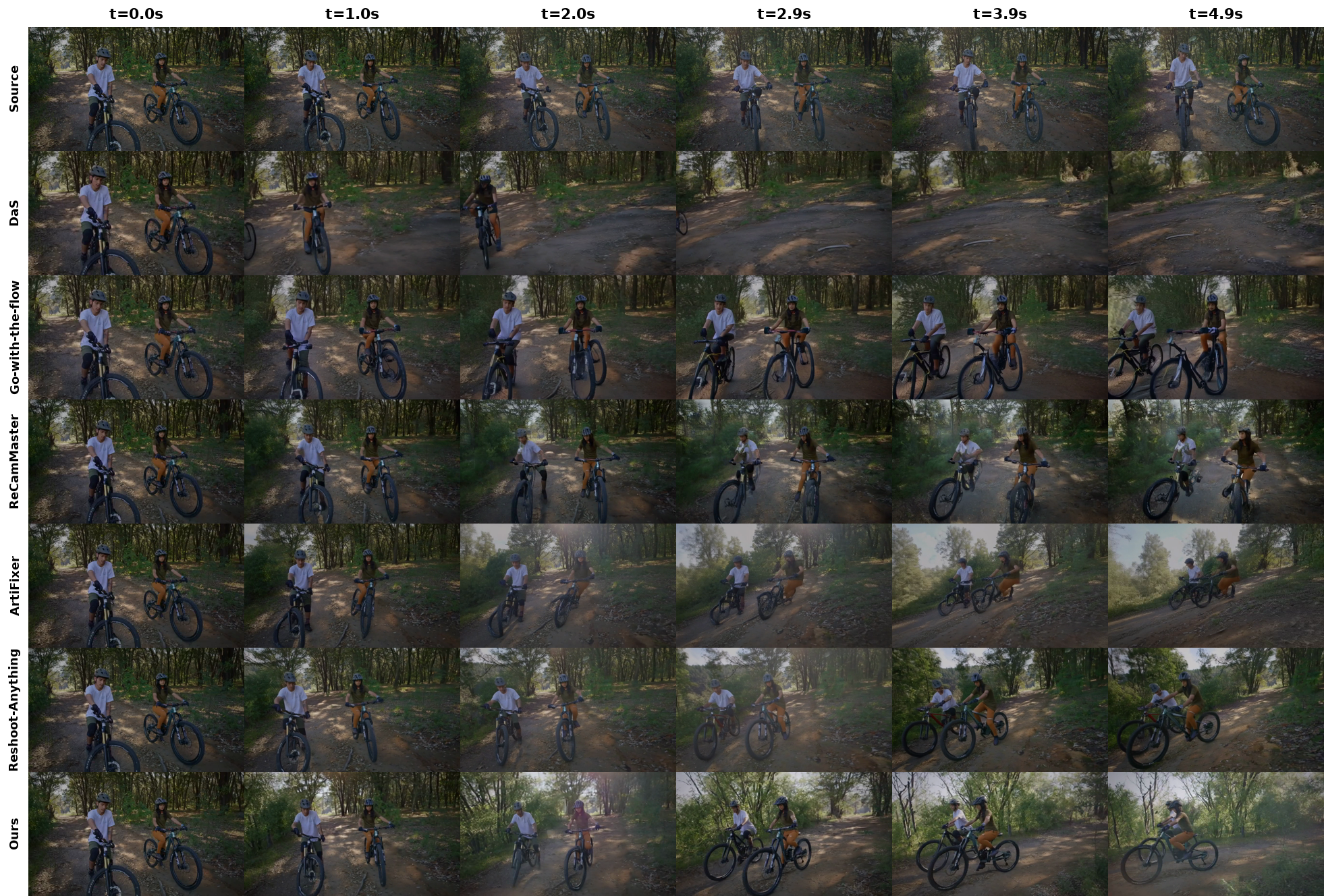}
\vspace{4pt}
\includegraphics[width=\textwidth]{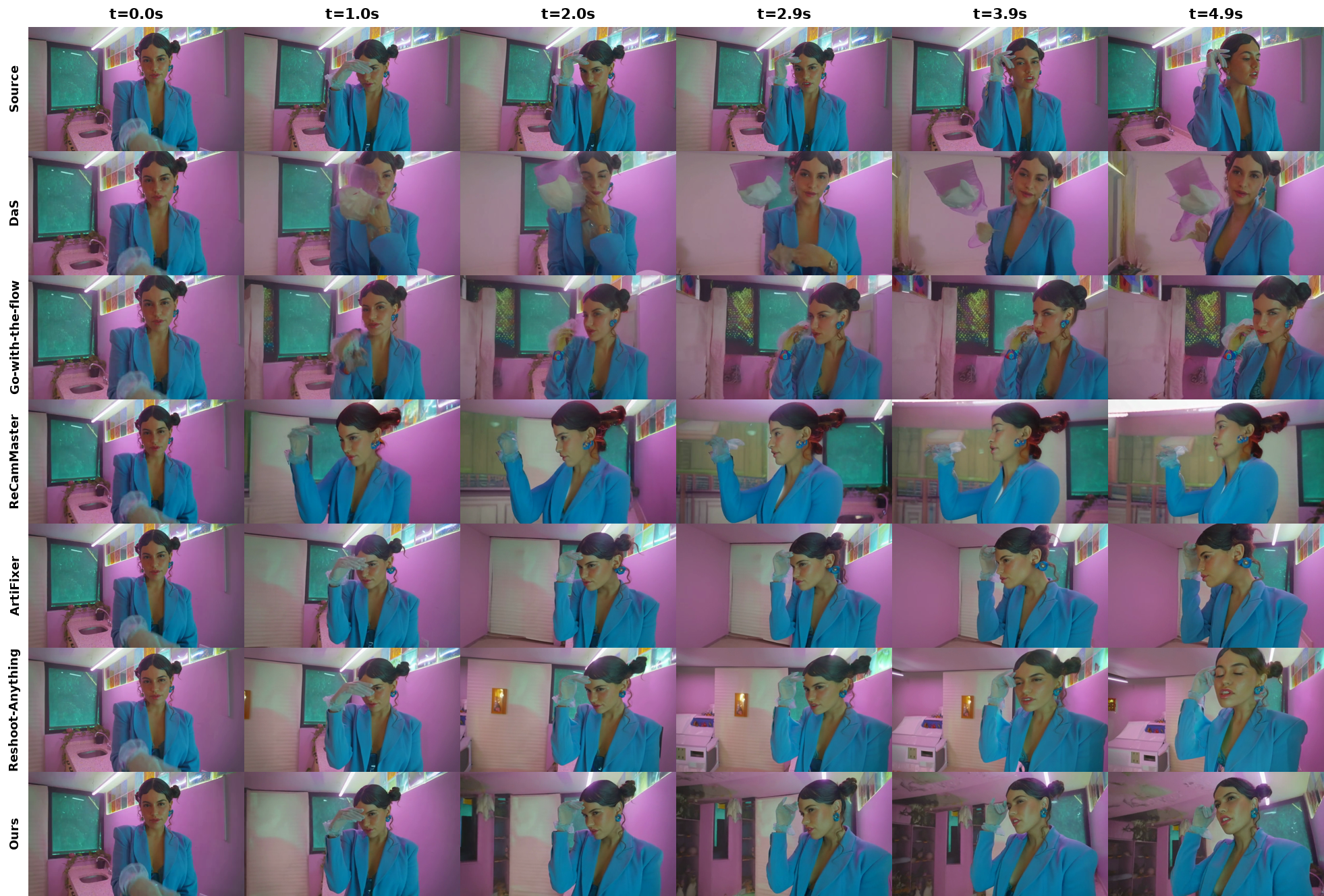}
\caption{\textbf{Qualitative comparisons} under an Arc-right camera trajectory. For each example,
six equally spaced frames are shown for the source video and each method's generated reshoot.}
\label{fig:supp_qual_comparison_arc_left}
\end{figure}

\begin{figure}[t]
\centering
\includegraphics[width=\textwidth]{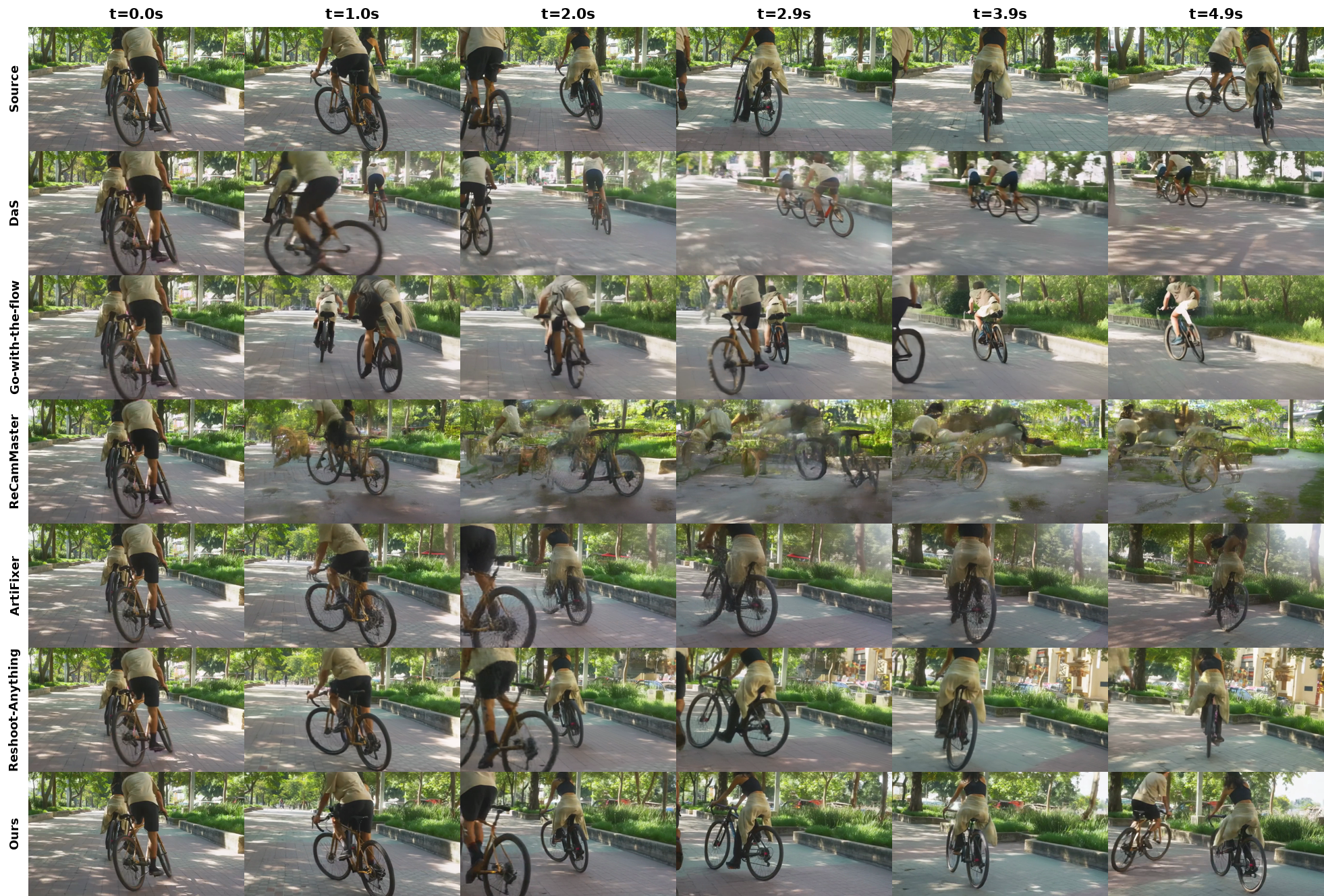}
\vspace{4pt}
\includegraphics[width=\textwidth]{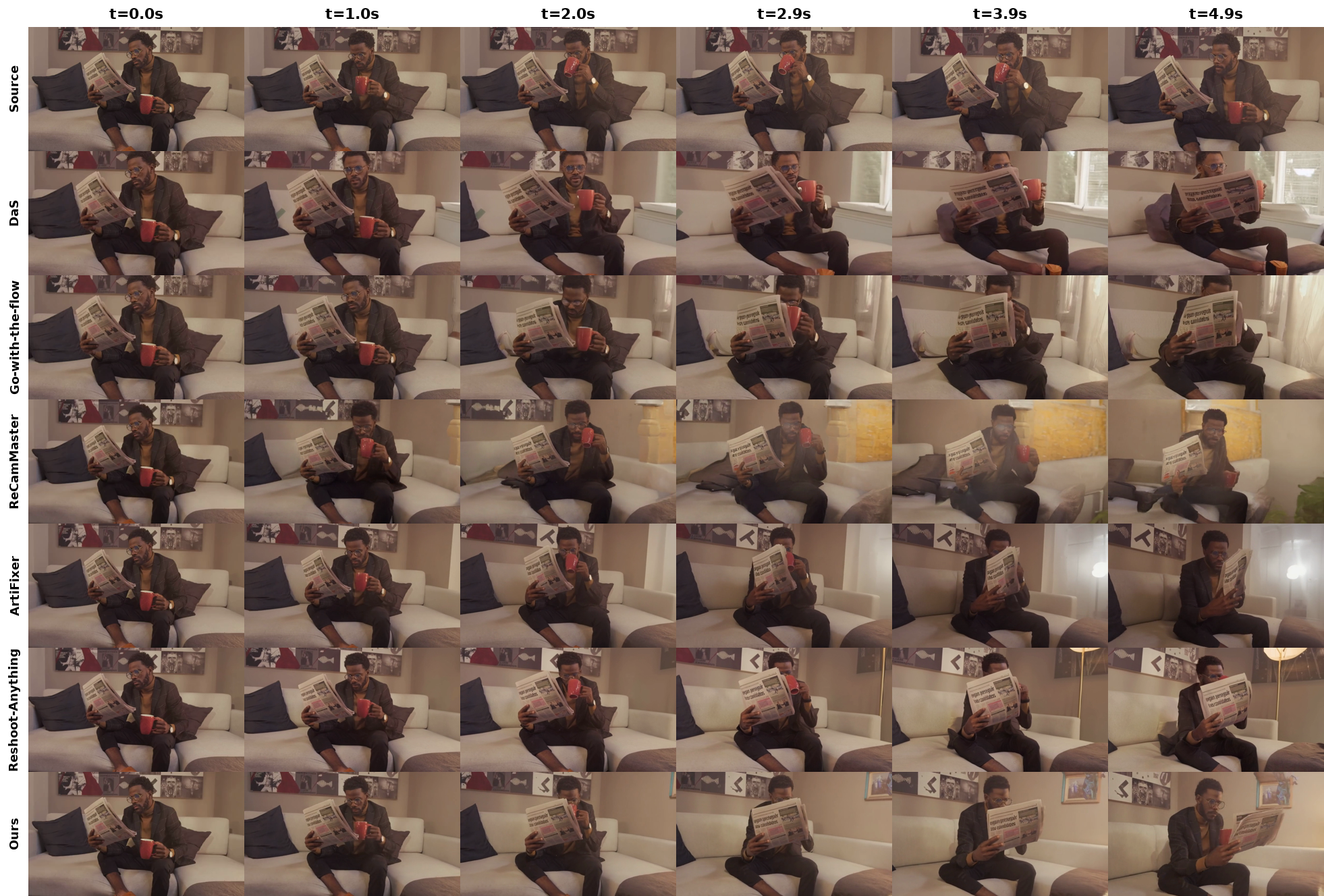}
\caption{\textbf{Qualitative comparisons} under an Arc-left camera trajectory. For each example,
six equally spaced frames are shown for the source video and each method's generated reshoot.}
\label{fig:supp_qual_comparison_arc_right}
\end{figure}

\subsection{Limitations and Failure Examples} \label{sec:supp_failure_examples}
Although SyncVid is trained with varied errors in the 4D noise representation
and is therefore less sensitive to reconstruction quality---the 4D
representation is used primarily to rasterize a synchronized noise prior---it
remains limited by the quality of the monocular reconstruction. Errors in
geometry, tracking, or primitive re-instantiation can still appear as subject
deformation or temporal drift, while disoccluded regions are only weakly
constrained and may be hallucinated by the diffusion model. Figure~\ref{fig:supp_failure_examples}
illustrates these failure modes: the Arc-right example exhibits severe
foreground inconsistencies, whereas the Dolly-out example retains the main
subject and setting but introduces unrealistic content in newly exposed regions.

Although SyncVid excels on video-synchronization metrics and in preserving
dynamic regions in the generated videos, it does not outperform state-of-the-art
methods in camera-pose estimation; methods such as ArtiFixer perform well on this
task. We speculate that this gap is partly due to the absence of explicit
camera-pose conditioning, such as the Pl\"ucker embeddings used by ArtiFixer.
Providing this additional conditioning could improve pose accuracy, but
integrating it is orthogonal to the structured-noise formulation presented here
and is left for future work.

\begin{figure}[t]
\centering
\includegraphics[width=\textwidth]{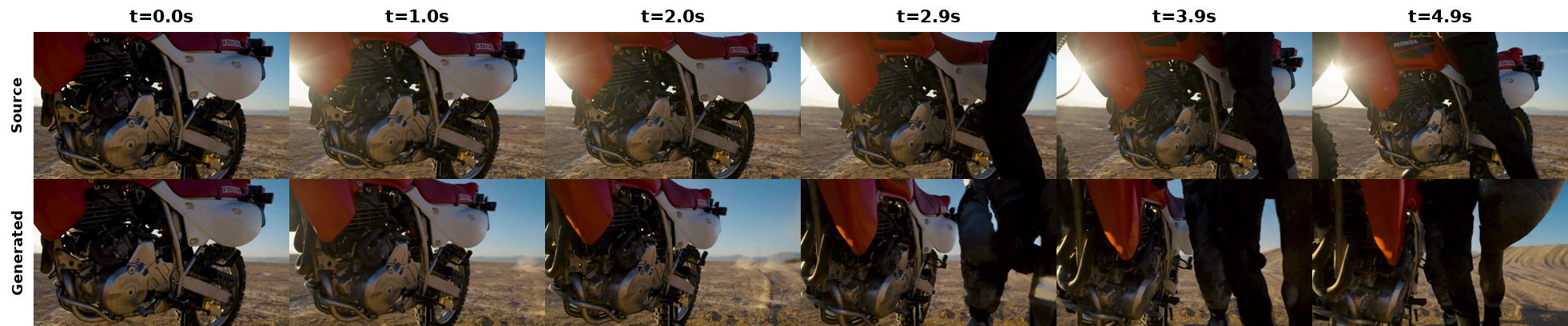}
\vspace{4pt}
\includegraphics[width=\textwidth]{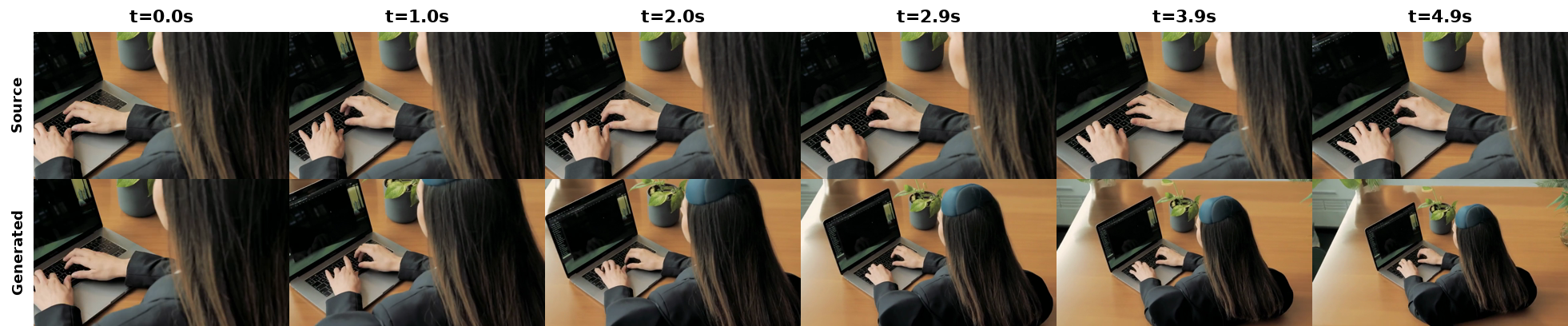}
\caption{\textbf{Failure examples.} Each example shows six equally spaced source frames in the
first row, followed by the corresponding generated frames. 
The top example follows Arc-left and the bottom example shows Dolly-out trajectory.}
\label{fig:supp_failure_examples}
\end{figure}

\end{document}